\def\eqref#1{equation~\ref{#1}}
\def\1{\bm{1}}
\DeclareMathAlphabet{\mathsfit}{\encodingdefault}{\sfdefault}{m}{sl}
\SetMathAlphabet{\mathsfit}{bold}{\encodingdefault}{\sfdefault}{bx}{n}
\definecolor{mygreen}{HTML}{3cb44b}
\definecolor{mypink}{HTML}{C32148}
\newcommand{\ctext}[3][RGB]{%
  \begingroup
  \definecolor{hlcolor}{#1}{#2}\sethlcolor{hlcolor}%
  \hl{#3}%
  \endgroup
}
\newcommand*{\escape}[1]{\texttt{\textbackslash#1}}
\def\mystrut{\rule{0pt}{1.0\normalbaselineskip}}
\title{Step-by-Step Reasoning for Math Problems \\ via Twisted Sequential Monte Carlo}
\author{\hspace{6em}\begin{tabular}{@{}c}
Shengyu Feng$^{1}$\thanks{Work done during internship at Apple. Correspondence to: Shengyu Feng (\texttt{shengyuf@cs.cmu.edu}) and Xiang Kong (\texttt{xiang\_kong@apple.com}).}\quad Xiang Kong$^2$\quad Shuang Ma$^2$\quad Aonan Zhang$^2$\mystrut \\
Dong Yin$^2$\quad Chong Wang$^2$\quad Ruoming Pang$^2$\quad Yiming Yang$^{1}$\mystrut \\
\end{tabular}\\
$\hspace{4em}^1$Language Technologies Institute, Carnegie Mellon University \mystrut\quad
$^2$Apple
}
\newtheorem{definition}{Definition}[section]
\begin{document}

\maketitle

\begin{abstract}
Augmenting the multi-step reasoning abilities of Large Language Models (LLMs) has been a persistent challenge. Recently, verification has shown promise in improving solution consistency by evaluating generated outputs. However, current verification approaches suffer from sampling inefficiencies, requiring a large number of samples to achieve satisfactory performance. Additionally, training an effective verifier often depends on extensive process supervision, which is costly to acquire. In this paper, we address these limitations by introducing a novel verification method based on Twisted Sequential Monte Carlo (TSMC). TSMC sequentially refines its sampling effort to focus exploration on promising candidates, resulting in more efficient generation of high-quality solutions. We apply TSMC to LLMs by estimating the expected future rewards at partial solutions. This approach results in a more straightforward training target that eliminates the need for step-wise human annotations. We empirically demonstrate the advantages of our method across multiple math benchmarks, and also validate our theoretical analysis of both our approach and existing verification methods. 
\end{abstract}

\section{Introduction}

In recent years, Large Language Models (LLMs) have achieved significant breakthroughs across various domains \citep{park2023generative, kaddour2023challenges, song2306restgpt, li2023m, chen2023towards, zheng2023judging, wang2024voyager}. However, their performance in multi-step reasoning tasks, such as solving complex mathematical or coding problems, remains notably constrained \citep{huang2023large, lightman2024lets}. A key challenge arises from the high sensitivity of these tasks to individual errors in each reasoning step. Autoregressive LLMs, in particular, struggle to maintain consistency throughout the reasoning process, leading to solutions that are prone to mistakes or logical inconsistencies \citep{shen-etal-2021-generate-rank, cobbe2021training}.

Verification \citep{cobbe2021training, uesato2022solving, lightman2024lets} has emerged as an effective strategy to mitigate these issues. In a typical verification process, multiple solutions are sampled from the generator (LLM) and evaluated by an external verifier. The verification results are then used to adjust the weight of each solution in determining the final answer. Since verification is generally simpler than generation, it tends to achieve higher accuracy and consistency compared to methods that rely solely on the generator, such as majority voting \citep{wang2023selfconsistency}. There are two primary types of verifiers: \textit{Outcome Reward Model (ORM)} \citep{cobbe2021training} and \textit{Process Reward Model (PRM)} \citep{uesato2022solving}. ORM evaluates the fully generated solution with a single scalar output representing the confidence score, and its training is straightforward: using outcome supervision based on comparing generated answers with ground truth. In contrast, PRM focuses on providing rewards at each step of the reasoning process, giving more detailed feedback on the intermediate steps. Although empirical evidence suggests that PRM outperforms ORM \citep{lightman2024lets}, there is no simple metric to evaluate the correctness of each step and efficiently collecting process supervision for such intermediate steps remains a great challenge.

Despite being promising, existing verification methods are still limited in the following two areas:
\begin{itemize}[leftmargin=*]
 \item \textbf{(Problem I) Low sampling efficiency:}  Current verification methods only evaluate fully generated solutions, without refining their quality during the generation process. Sampling efforts would be wasted on partial solutions that are clearly incorrect. As a result, a large number of samples are needed to obtain even one correct solution, making the process inefficient and resource-intensive.

  \item \textbf{(Problem II) Difficulty in obtaining process supervision:} Training powerful verifiers like PRM requires detailed step-wise supervision. Existing approaches either rely on human  effort \citep{uesato2022solving, lightman2024lets}  or tree search \citep{wang2023math, luo2024improvemathematicalreasoninglanguage} for intermediate step annotations. However, both approaches are inefficient and lack scalability, limiting their practical application for large-scale tasks.
\end{itemize}

To address these two major problems in existing verification methods, we propose a novel approach based on \textit{Twisted Sequential Monte Carlo (TSMC)} \citep{doucet2001sequential, Moral2006SMC, Briers2009Smooth, chopin2020introduction}. TSMC is a significant advancement in the \textit{Importance Sampling (IS)} technology. Building on the foundation of Sequential Monte Carlo (SMC), TSMC is intended to enhance the sampling efficiency of IS in the high-dimensional space. It employs a series of intermediate target distributions at each resampling step, which are defined through twist functions. This function strategically guides the samples towards the high-density region in the target distribution. By retaining the most promising samples, TSMC effectively reduces the variance in estimated quantities and boosts the efficiency of the sampling process.

Notably, the application of TSMC to improve the verification method in LLMs has not been explored previously, making our study the first of its kind in this area. 
Our approach is inspired by the realization that existing verification methods employing reward-weighted majority voting \citep{li-etal-2023-making} essentially performs IS, where the sampling efficiency deteriorates as the disparity between the proposal distribution (which generates potential solutions) and the target distribution (concentrated around correct solutions) widens. 
We identify Problem I---low sampling efficiency---as a consequence of high variance in IS when there is a substantial deviation between the proposal and target distributions. In multi-step reasoning, even minor discrepancies at each step can cumulate into a substantial mismatch between the two distributions.
We therefore apply TSMC to improve the sampling efficiency of verification by focusing the sampling effort on promising partial solutions during the intermediate decoding process. 
We show that the optimal twist function in our case, which is used to guide the sampling of TSMC, is proportional to expected future rewards, also known as the value function. The value function could be simply learned through a neural regressor on the data independently sampled from the generator. This simplifies the training target by eliminating the need for human annotations or tree search. We also highlight the relationship between TSMC and PRM in existing verification methods, allowing for a comprehensive analysis of bias and variance.

We compare our proposed method with baseline approaches on two math benchmarks: GSM8K \citep{cobbe2021training}  and MATH \citep{MATH},  utilizing fine-tuned models from Llemma-7B \citep{azerbayev2023llemma} and DeepSeek-7B \citep{deepseek-math} as the generators. Our results indicate that TSMC consistently improves both the quality of the generated solutions and the overall verification performance. Additionally, we empirically validate the theoretical advantage of TSMC as an unbiased estimator with reduced variance, further highlighting its effectiveness. 

Our main contributions can be summarized as follows:
\begin{enumerate}[leftmargin=*]
    \item 
      We propose a novel method based on TSMC that enhances the sampling efficiency of verification and reduces the reliance on process supervision, which is usually obtained through human annotations or tree search, in training verifiers.
    \item 
    We introduce a new theoretical framework for analyzing verification methods, providing deeper insights into their effectiveness and limitations.
    \item 
    Our empirical results demonstrate that TSMC consistently outperforms existing verification methods across multiple math benchmarks, utilizing various generators.
\end{enumerate}
\vspace{-5pt}

\section{Preliminaries}

\subsection{LLMs for Math}
\label{subsec:verification}
Following \citet{lightman2024lets}, we fix the generator without further fine-tuning via reinforcement learning. For a problem statement $\mathbf{x}_0$, a (tokenized) candidate solution can be sampled from the generator, denoted as $\mathbf{x}_{1:T}\sim p(\cdot|\mathbf{x}_0)$. For simplicity, we always assume the dependence on $\mathbf{x}_0$ and no longer explicitly write it out in the following text. 
The solution is assumed to be decomposable as $\mathbf{x}_{1:T}=[\mathbf{x}_{1},\cdots,\mathbf{x}_{T}]$, where $\mathbf{x}_i$ is a variable-length reasoning step. By default, the LLM generates all steps in an autoregressive manner, i.e., $\mathbf{x}_{t}\sim p(\cdot|\mathbf{x}_{1:t-1})$.  Each solution $\mathbf{x}_{1:T}$ contains the reasoning process and an answer to the problem, with examples shown in Appendix \ref{sec:example}. We represent the extracted answer from the solution as $a=\texttt{Ans}(\mathbf{x}_{1:T})$, and its correctness as $\phi(a)$, which is $1$ if it is correct (matched with the ground-truth answer) and $0$ otherwise.

The primary methods for solving math problems with LLMs include majority voting \citep{wang2023selfconsistency} and verification \citep{cobbe2021training, uesato2022solving, lightman2024lets}.

\textbf{Majority Voting.} Majority voting independently samples $N$ (tokenized) candidate solutions $\{\mathbf{x}^i_{1:T}\}_{i=1}^N$ from the generator.  It selects the final answer as the one with the most votes, i.e.,  $a^*=\arg\max_a \sum_{i=1}^N \mathbb{I}(a_i=a)$, where $\mathbb{I}(\cdot)$ is the indicator function.

\paragraph{Verification.} Verification introduces an external verifier $r(\cdot)$ to evaluate the $N$ solutions produced by the LLM generator. 
Existing verifiers can be roughly  divided into two kinds: 
the outcome reward model (ORM) family and the process reward model (PRM) family. ORM directly evaluates the confidence score for each complete solution as $s=r_{ORM}(\mathbf{x}_{1:T})$, while PRM aggregates the confidence scores of sub-sequences as  
 $s=r_{PRM}(\mathbf{x}_{1:T})=\texttt{Aggr}(\{ r_{PRM}(\mathbf{x}_{t}|\mathbf{x}_{1:t-1})\}_{t=1}^T)$. Here, $r_{PRM}(\mathbf{x}_{t}|\mathbf{x}_{1:t-1})$ corresponds to the  process reward, and $\texttt{Aggr}(\cdot)$ is the aggregation function such as the minimum or product:
\begin{equation}
\label{eq:aggr}
 \texttt{min}=\min\{r_{PRM}(\mathbf{x}_t|\mathbf{x}_{1:t-1})\}_{t=1}^T,\qquad \texttt{prod}=\prod_{t=1}^Tr_{PRM}(\mathbf{x}_t|\mathbf{x}_{1:t-1}).
\end{equation}
The final answer could either be selected from the solution with the highest score $a^*=\arg\max_{a^i} s^i$ (best-of-$N$), or the answer with the highest total weight $a^*=\arg\max_{a} \sum_{i=1}^Ns^i\mathbb{I}(a_i=a)$ (weighted majority voting) \citep{li-etal-2023-making}. In this work, we mainly develop our method on top of the weighted majority voting due to its empirical better performance \citep{sun2024easy}.

\subsection{Importance Sampling and Twisted Sequential Monte Carlo}
\label{subsec:smc}

\paragraph{Importance Sampling.} Consider a target distribution $\sigma(\mathbf{x}_{1:T})=\frac{\tilde{\sigma}(\mathbf{x}_{1:T})}{Z^{\sigma}}$, where $\tilde{\sigma}(\mathbf{x}_{1:T})\geq0$ is the unnormalized probability density and $Z^{\sigma}=\int_{\mathbf{x}_{1:T}}\tilde{\sigma}(\mathbf{x}_{1:T})d\mathbf{x}_{1:T}$ is the normalizing factor, typically intractable. For a given function $h(\mathbf{x}_{1:T})$, it  
could be difficult to estimate its expectation under
$\sigma(\mathbf{x}_{1:T})$ via direct sampling. Importance sampling (IS) \citep{robert2004monte} instead introduces a proposal distribution $q(\mathbf{x}_{1:T})$ and provides an estimator of the expectation as
\begin{equation}
\label{eq:is_expectation}
\mathbb{E}_{\sigma(\mathbf{x}_{1:T})}[h(\mathbf{x}_{1:T})]=\frac{1}{Z^{\sigma}}\mathbb{E}_{q(\mathbf{x}_{1:T})}[\frac{\tilde{\sigma}(\mathbf{x}_{1:T})}{q(\mathbf{x}_{1:T})}h(\mathbf{x}_{1:T})]=\frac{\mathbb{E}_{q(\mathbf{x}_{1:T})}[\frac{\tilde{\sigma}(\mathbf{x}_{1:T})}{q(\mathbf{x}_{1:T})}h(\mathbf{x}_{1:T})]}{\mathbb{E}_{q(\mathbf{x}_{1:T})}[\frac{\tilde{\sigma}(\mathbf{x}_{1:T})}{q(\mathbf{x}_{1:T})}]}.
\end{equation}
Here, $\frac{\tilde{\sigma}(\mathbf{x}_{1:T})}{q(\mathbf{x}_{1:T})}$ is known as the \textit{importance weight} $w(\mathbf{x}_{1:T})$. Using some  $q(\mathbf{x}_{1:T})$ that is easy to sample from, we can leverage Equation \ref{eq:is_expectation} to estimate the expectation via the Monte Carlo method:
\begin{equation}
\label{eq:is_monte}
\mathbb{E}_{\sigma(\mathbf{x}_{1:T})}[h(\mathbf{x}_{1:T})]\approx\sum_{i=1}^N\frac{w(\mathbf{x}^i_{1:T})}{ \sum_{j=1}^Nw(\mathbf{x}^j_{1:T})} h(\mathbf{x}^i_{1:T}), \qquad \mathbf{x}^{i}_{1:T} \sim q(\mathbf{x}_{1:T}).
\end{equation}
Although ideally a zero variance of the importance weight could be achieved when $q(\mathbf{x}_{1:T})=\sigma(\mathbf{x}_{1:T})$, such a case rarely holds in practice. Remarkably, the distribution mismatches at each step are accumulated as the generation proceeds, leading to an exponentially increasing variance with respect to $T$ \citep{doucet2009tutorial}. Such a limitation makes IS inefficient in the high-dimensional space, since extensive sampling is needed to reduce the variance.

\paragraph{Twisted Sequential Monte Carlo.}
Twisted Sequential Monte Carlo (TSMC) enhances the sampling efficiency of IS by modifying the marginal distribution of the proposal, $q(\mathbf{x}_{1:t})$, to a more informative intermediate distribution, $\pi_t(\mathbf{x}_{1:t})$. The aim is to ensure that partial sequences from $\pi_t(\mathbf{x}_{1:t})$ are more likely to produce high-density samples in the final target distribution $\sigma(\mathbf{x}_{1:T})$. Here, $\{\pi_t\}_{t=1}^{T}$ are known as the (twisted) intermediate targets, where $\pi_t(\mathbf{x}_{1:t})=\frac{\tilde{\pi}_{t} (\mathbf{x}_{1:t})}{Z_t^{\pi}}$, and the final target is aligned with $\tilde{\pi}_T\equiv\tilde{\sigma}$. In standard Sequential Monte Carlo, $\pi_t(\mathbf{x}_{1:t})$ is typically the target marginal $\sigma(\mathbf{x}_{1:t})=\sum_{\mathbf{x}_{t+1:T}}\sigma(\mathbf{x}_{1:T})$, to ensure that at each time step, the marginal distribution matches the target. However, if our primary interest is only the final target $\sigma(\mathbf{x}_{1:T})$, we are free to design $\{\pi_t\}_{t=1}^{T-1}$ on the specific problem at hand, leading to the flexibility of the TSMC method.

TSMC operates recursively, alternating between generation and resampling. At each step, TSMC takes the input of $N$ partial sequences, $\{\mathbf{x}^i_{1:t-1}\}_{i=1}^N$, following the distribution $\pi_{t-1}(\mathbf{x}_{1:t-1})$, and extends these sequences by sampling the next step from the proposal, i.e., $\mathbf{x}_{t}\sim q(\cdot|\mathbf{x}_{1:t-1})$. It computes the \textit{incremental importance weight} for each sequence as 
\begin{equation}
\label{eq:weights}
    w_t(\mathbf{x}_{1:t})=\frac{\tilde{\pi}_{t}(\mathbf{x}_{1:t})}{q(\mathbf{x}_{t}|\mathbf{x}_{1:t-1})\tilde{\pi}_{t-1}(\mathbf{x}_{1:t-1})}.
\end{equation}
These weights are used to approximate the distribution $\pi_{t}$ by resampling the partial sequences from a categorical distribution with  the self-normalized weights:
\begin{equation}
\label{eq:smc_sample}
    \mathbf{x}_{1:t}^{i} \leftarrow  \mathbf{x}_{1:t}^{\omega_i}, \qquad \omega^i\sim \texttt{Cat}(\{\frac{w_t({\mathbf{x}^i_{1:t})}}{\sum_{j=1}^Nw_t({\mathbf{x}^j_{1:t}})})\}_{i=1}^N), \qquad i=1,\cdots,N.
\end{equation}
This new set of $N$ sequences would serve as the input to the next step of TSMC. With informative intermediate targets, the resampling step could promptly discard the sequences with a low potential in the target distribution and avoid a large variance in the importance weights. More importantly, since $\pi_T(\mathbf{x}_{1:T})$ is matched with the target $\sigma(\mathbf{x}_{1:T})$, TSMC always yields an unbiased estimator of $\sum_{\mathbf{x}_{1:T}}\tilde{\sigma}(\mathbf{x}_{1:T}) h(\mathbf{x}_{1:T})$ regardless of the twist functions \citep{moral2004feynman}.

\

\section{Methodology}
\subsection{Existing Verification Methods are Performing Importance Sampling}

The motivation of our method is based on the observation that existing verification methods are essentially performing IS. To see this, compare the normalized voting weight  of each answer in majority voting and weighted majority voting when $N$ is infinitely large: 
\begin{align}
\lim_{N\rightarrow\infty}\frac{\sum_{i=1}^N \mathbb{I}(a_i=a)}{N}&=\mathbb{E}_{p(\mathbf{x}_{1:T})}[\mathbb{I}(\texttt{Ans}(\mathbf{x}_{1:T})=a)]&\text{(majority voting)}\\ 
\lim_{N\rightarrow\infty}\frac{\sum_{i=1}^N s_i\mathbb{I}(a_i=a)}{N} & =\mathbb{E}_{p(\mathbf{x}_{1:T})}[r(\mathbf{x}_{1:T})\mathbb{I}(\texttt{Ans}(\mathbf{x}_{1:T})=a)] & \text{(weighted majority voting)}
\end{align}
It can be seen that the weighting process actually introduces a factor $r(\mathbf{x}_{1:T})$ with a similar role of the importance weight in Equation \ref{eq:is_expectation}. In particular, we can let
$\tilde{\sigma}(\mathbf{x}_{1:T})=p(\mathbf{x}_{1:T})r(\mathbf{x}_{1:T})$
and treat weighted majority voting as IS to estimate the answer voting weight
\begin{equation}
w(a)=\sum_{\mathbf{x}_{1:T}}\tilde{\sigma}(\mathbf{x}_{1:T})\mathbb{I}(\texttt{Ans}(\mathbf{x}_{1:T})=a).
\end{equation}
However, as described in Section \ref{subsec:smc}, the importance weight in IS suffers from a large variance in the high-dimensional space, so do the estimation objectives according to Proposition \ref{prop:var}.
\begin{restatable}{proposition}{var}
\label{prop:var}For IS  with the  target $\sigma(\mathbf{x}_{1:T})$ and proposal $q(\mathbf{x}_{1:T})$, up to a constant $C$ independent of $q(\mathbf{x}_{1:T})$, the following identity in the variance holds for the set of all  answers $\mathcal{A}$:
\begin{equation}
\sum_{a\in\mathcal{A}} \mathbb{V}_q[\frac{\tilde{\sigma}(\mathbf{x}_{1:T})\mathbb{I}(\texttt{Ans}(\mathbf{x}_{1:T})=a)}{q(\mathbf{x}_{1:T})}] =\mathbb{V}_q[\frac{\tilde{\sigma}(\mathbf{x}_{1:T})}{q(\mathbf{x}_{1:T})}]+C.
\end{equation}
\end{restatable}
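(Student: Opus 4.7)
The plan is to start from the standard variance decomposition $\mathbb{V}_q[X] = \mathbb{E}_q[X^2] - \mathbb{E}_q[X]^2$ applied to both sides, then exploit two simple facts: the indicator squared equals itself, and the indicators $\{\mathbb{I}(\texttt{Ans}(\mathbf{x}_{1:T})=a)\}_{a\in\mathcal{A}}$ form a partition of unity over any solution (since each solution yields exactly one answer). These together let the sum over $a$ on the left collapse the second-moment term into a single second moment matching the right-hand side, while what remains is a function of $\tilde{\sigma}$ alone.

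Concretely, first I would write, for each answer $a$,
\[
\mathbb{V}_q\!\left[\tfrac{\tilde{\sigma}(\mathbf{x}_{1:T})\mathbb{I}(\texttt{Ans}(\mathbf{x}_{1:T})=a)}{q(\mathbf{x}_{1:T})}\right] = \mathbb{E}_q\!\left[\tfrac{\tilde{\sigma}(\mathbf{x}_{1:T})^2\mathbb{I}(\texttt{Ans}(\mathbf{x}_{1:T})=a)}{q(\mathbf{x}_{1:T})^2}\right] - w(a)^2,
\]
using $\mathbb{I}^2=\mathbb{I}$ and recognising that the first-moment term is exactly $w(a)=\sum_{\mathbf{x}_{1:T}}\tilde{\sigma}(\mathbf{x}_{1:T})\mathbb{I}(\texttt{Ans}(\mathbf{x}_{1:T})=a)$, which does not depend on $q$. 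Summing over $a\in\mathcal{A}$ and using $\sum_{a}\mathbb{I}(\texttt{Ans}(\mathbf{x}_{1:T})=a)=1$ inside the expectation yields
\[
\sum_{a\in\mathcal{A}}\mathbb{V}_q\!\left[\tfrac{\tilde{\sigma}(\mathbf{x}_{1:T})\mathbb{I}(\texttt{Ans}(\mathbf{x}_{1:T})=a)}{q(\mathbf{x}_{1:T})}\right] = \mathbb{E}_q\!\left[\tfrac{\tilde{\sigma}(\mathbf{x}_{1:T})^2}{q(\mathbf{x}_{1:T})^2}\right] - \sum_{a\in\mathcal{A}} w(a)^2.
\]

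Next I would handle the right-hand side by the same variance decomposition, noting that $\mathbb{E}_q[\tilde{\sigma}(\mathbf{x}_{1:T})/q(\mathbf{x}_{1:T})] = \sum_{\mathbf{x}_{1:T}}\tilde{\sigma}(\mathbf{x}_{1:T}) = Z^{\sigma}$, which is also independent of $q$. Hence
\[
\mathbb{E}_q\!\left[\tfrac{\tilde{\sigma}(\mathbf{x}_{1:T})^2}{q(\mathbf{x}_{1:T})^2}\right] = \mathbb{V}_q\!\left[\tfrac{\tilde{\sigma}(\mathbf{x}_{1:T})}{q(\mathbf{x}_{1:T})}\right] + (Z^{\sigma})^2.
\]
Substituting gives the claimed identity with
\[
C = (Z^{\sigma})^2 - \sum_{a\in\mathcal{A}} w(a)^2,
\]
which is manifestly independent of $q$.

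There is no real obstacle here; the only subtlety worth stating carefully is that the sum over $a$ pulls inside the expectation because the indicators partition the event space, which is what removes the dependence on the particular answer labels and produces a clean total variance. I would also note the sign: since $\sum_a w(a) = Z^{\sigma}$ and Jensen/Cauchy-Schwarz give $\sum_a w(a)^2 \le (Z^{\sigma})^2$ (with equality only when there is a single answer supported), the constant $C\ge 0$, so reducing the total per-answer variance by shrinking $\mathbb{V}_q[\tilde{\sigma}/q]$ is genuinely meaningful. This is the payoff used in the subsequent motivation of TSMC.
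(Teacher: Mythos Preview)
Your proof is correct and follows essentially the same approach as the paper: expand each variance as $\mathbb{E}[X^2]-\mathbb{E}[X]^2$, use $\mathbb{I}^2=\mathbb{I}$ and $\sum_{a}\mathbb{I}(\texttt{Ans}(\mathbf{x}_{1:T})=a)=1$ to collapse the summed second moment into a single one, and identify the leftover $q$-independent constant $C=(Z^{\sigma})^2-\sum_a w(a)^2$. Your additional remark that $C\ge 0$ is a nice bonus not stated in the paper.
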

We include the proof in Appendix \ref{subsec:answer_variance}. This issue also accounts for problem I, that is, many samples are needed to reduce the variance of the estimator. Therefore, we aim to address this problem via TSMC, which provides the unbiased estimator of $w(a)$, but with less  variance. We visualize the comparison between existing IS-based verification and our TSMC-based verification in Figure \ref{fig:overview}.

\begin{figure}[t!]
    \centering
    \includegraphics[width=\linewidth]{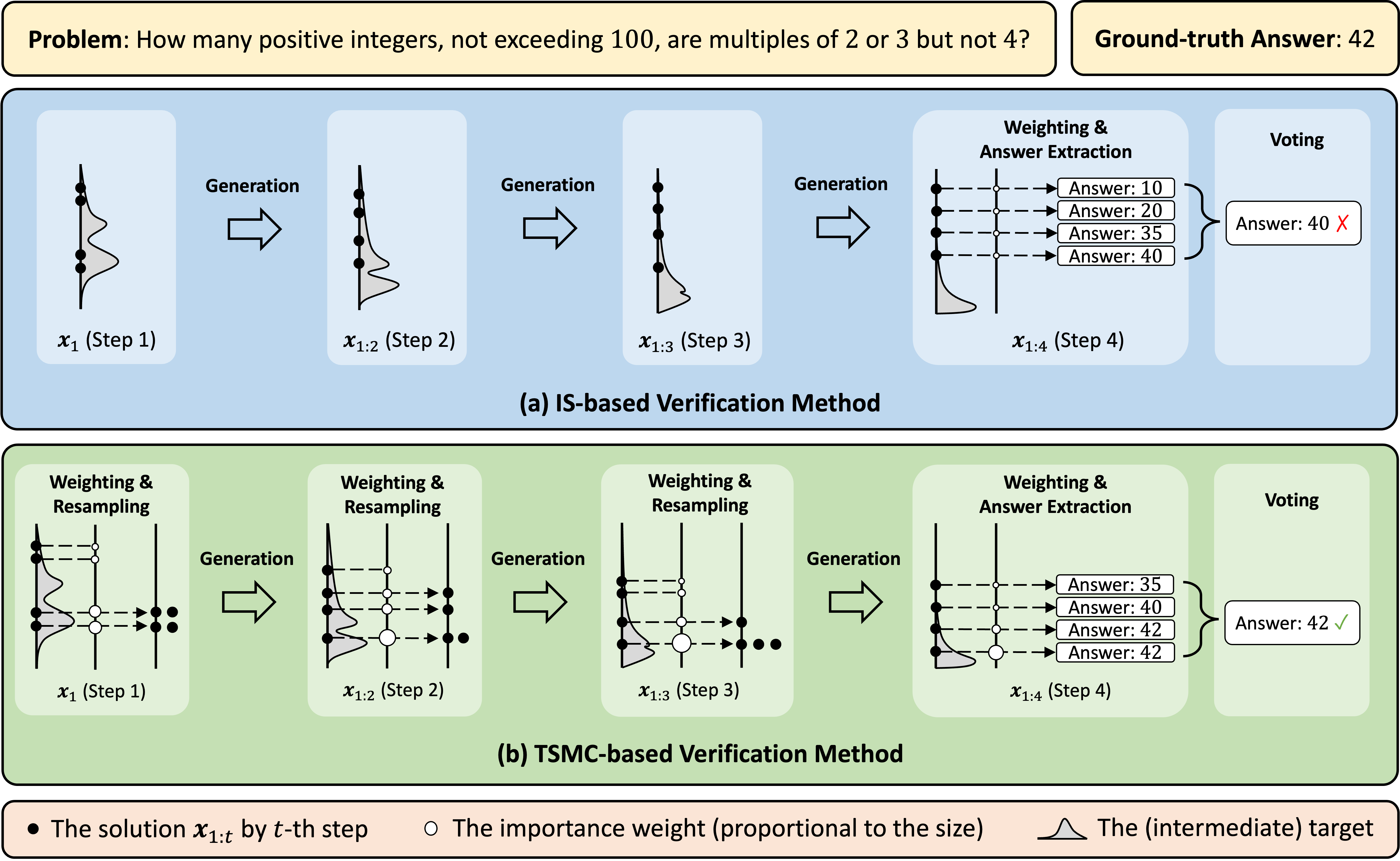} 
    \caption{
    IS-based verification vs. TSMC-based verification. (a) \textbf{Typical IS-based verification} only weights (verifies) the solutions until they are fully generated, which often leads to generating incorrect solutions with high probability, aka low sampling efficiency. (b) \textbf{Our TSMC-based verification} weights and resamples partial solutions at each step of the generation process. This sequential resampling process reduces the discrepancy between the proposal and target distributions, improving the overall correctness of the generated solutions and thus the sampling efficiency.}
    \label{fig:overview}
\end{figure}

\subsection{Verification via TSMC}
\label{subsec:veri_smc}
The optimal reward model in our problem is simply the correctness function of each solution, i.e.,  $r^*(\mathbf{x}_{1:T})=\phi(\texttt{Ans}(\mathbf{x}_{1:T}))$. In the following section, we fix our target distribution as  
\begin{equation}
\sigma(\mathbf{x}_{1:T})=\frac{p(\mathbf{x}_{1:T})\phi(\texttt{Ans}(\mathbf{x}_{1:T}))}{Z^{\sigma}},
\end{equation}
since it corresponds to the actual target distribution  we try to sample from.

To utilize TSMC for verification, we need to decide the proposal distribution $q(\mathbf{x}_{t}|\mathbf{x}_{1:t-1})$ and intermediate targets $\{\pi_t\}_{t=1}^{T-1}$. Following \citet{zhao2024probabilistic}, we define the intermediate targets through the nonnegative twist functions $\{\psi_t\}_{t=1}^{T-1}$, which are designed to be optimized: 
\begin{equation}
    \pi_{t}(\mathbf{x}_{1:t}) = \frac{p(\mathbf{x}_{1:t})\psi_t(\mathbf{x}_{1:t})}{Z^{\pi}_t}.
\end{equation}
 Let $\psi_0(\mathbf{x}_0)\equiv1$ and $\psi_T(\mathbf{x}_{1:T})\equiv \phi(\texttt{Ans}(\mathbf{x}_{1:T}))$, the incremental importance weight is given by
\begin{equation}
\label{eq:actual_weights}
w_t(\mathbf{x}_{1:t})=\frac{p(\mathbf{x}_t|\mathbf{x}_{1:t-1})\psi_{t}(\mathbf{x}_{1:t})}{q(\mathbf{x}_t|\mathbf{x}_{1:t-1})\psi_{t-1}(\mathbf{x}_{1:t-1})}.
\end{equation}
\citet{zhao2024probabilistic} have shown that the optimal proposal and intermediate targets correspond to 
\begin{equation}
\label{eq:optim}
    q_t^*(\mathbf{x}_t|\mathbf{x}_{1:t-1}) = \frac{\sigma(\mathbf{x}_{1:t})}{\sigma(\mathbf{x}_{1:t-1})}
\qquad \text{and} \qquad
    \pi_t^*(\mathbf{x}_{1:t}) = \sigma(\mathbf{x}_{1:t}).
\end{equation}
However, it is hard to directly apply these optimal choices in our case. We outline the reasons and our approach as follows.
\paragraph{Proposal.} There are two challenges that prevent us from using the optimal proposal. First, the combinatorial nature of the step $\mathbf{x}_t$, which consists of multiple tokens, makes $q_t^*(\mathbf{x}_t|\mathbf{x}_{1:t-1})$ generally intractable. Moreover, even if we can approximately sample from it, we still encounter the second challenge, the large variance in $q_t^*(\mathbf{x}_t|\mathbf{x}_{1:t-1})$ caused by the high dimensionality of $\mathbf{x}_{t}$. This would result in the \textit{weight degeneracy} issue \citep{naesseth2019elements} of TSMC, where the incremental importance weights would be dominated by a single sample, resulting in a poor diversity of solutions after resampling. We therefore simply let $q(\mathbf{x}_{t}|\mathbf{x}_{1:t-1})=p(\mathbf{x}_{t}|\mathbf{x}_{1:t-1})$ for ease of sampling, and weight degeneracy would be alleviated when $q(\mathbf{x}_t|\mathbf{x}_{1:t-1})$ cancels in Equation \ref{eq:actual_weights}.

\paragraph{Intermediate targets.} The intermediate targets in Equation \ref{eq:optim} are no longer optimal when we use $p(\mathbf{x}_t|\mathbf{x}_{1:t-1})$ as our proposal. However, it is also hard to solve the globally optimal intermediate targets for an arbitrary proposal. We instead seek to sequentially derive the locally optimal twists in a greedy manner. Since our ultimate goal is to estimate the answer weights $w(a)$, we start by looking for the optimal intermediate target $\pi^*_{T-1}(\mathbf{x}_{1:T-1})$ to minimize the variance of the incremental importance weight in the last TSMC step. We prove the following proposition in Appendix \ref{subsec:local_optimal}.
\begin{restatable}{proposition}{intermediate}
\label{prop:optimal_target}
    Given an intermediate target   $\pi_t(\mathbf{x}_{1:t})$ and the proposal $q(\mathbf{x}_t|\mathbf{x}_{1:t-1})$, the optimal  $\pi_{t-1}(\mathbf{x}_{1:t-1})$ in minimizing the variance of the incremental importance weight corresponds to
    \begin{equation}
    \label{eq:optimal_target}
        \pi^q_{t-1}(\mathbf{x}_{1:t-1})\propto \sqrt{\sum_{\mathbf{x}_t}\frac{\pi_t(\mathbf{x}_{1:t})^2}{q(\mathbf{x}_t|\mathbf{x}_{1:t-1})}}.
    \end{equation}
\end{restatable}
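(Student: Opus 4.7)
The plan is to minimize the second moment of the incremental importance weight directly, since its mean is invariant under the choice of $\pi_{t-1}$, and then apply Cauchy--Schwarz to identify the minimizer.

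First I would set up the optimization precisely. Without loss of generality, I restrict to properly normalized $\pi_{t-1}$ (so $\tilde{\pi}_{t-1}=\pi_{t-1}$ up to an absorbable constant), since the proposition only asserts proportionality. Under TSMC, the samples feeding into step $t$ are drawn from the joint law $\pi_{t-1}(\mathbf{x}_{1:t-1})\, q(\mathbf{x}_t\mid\mathbf{x}_{1:t-1})$, and this is the distribution under which $w_t$ is evaluated. A direct substitution of Equation~\ref{eq:weights} then shows
\begin{equation}
\mathbb{E}[w_t] \;=\; \sum_{\mathbf{x}_{1:t}} \tilde{\pi}_t(\mathbf{x}_{1:t}) \;=\; Z^{\pi}_t,
\end{equation}
which does not depend on $\pi_{t-1}$. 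Hence minimizing $\mathbb{V}[w_t]$ is equivalent to minimizing $\mathbb{E}[w_t^2]$.

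Next I would compute the second moment by exploiting the fact that the $q$ in the denominator cancels one factor:
\begin{equation}
\mathbb{E}[w_t^2] \;=\; \sum_{\mathbf{x}_{1:t-1}} \frac{1}{\pi_{t-1}(\mathbf{x}_{1:t-1})} \underbrace{\sum_{\mathbf{x}_t} \frac{\tilde{\pi}_t(\mathbf{x}_{1:t})^2}{q(\mathbf{x}_t\mid \mathbf{x}_{1:t-1})}}_{=:\,g(\mathbf{x}_{1:t-1})^2}.
\end{equation}
This reduces the problem to minimizing $\sum_{\mathbf{x}_{1:t-1}} g(\mathbf{x}_{1:t-1})^2/\pi_{t-1}(\mathbf{x}_{1:t-1})$ subject to $\sum_{\mathbf{x}_{1:t-1}}\pi_{t-1}(\mathbf{x}_{1:t-1})=1$.

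Finally I would apply Cauchy--Schwarz in the form
\begin{equation}
\Bigl(\sum_{\mathbf{x}_{1:t-1}} g(\mathbf{x}_{1:t-1})\Bigr)^{\!2} \;=\; \Bigl(\sum_{\mathbf{x}_{1:t-1}} \tfrac{g(\mathbf{x}_{1:t-1})}{\sqrt{\pi_{t-1}}}\sqrt{\pi_{t-1}}\Bigr)^{\!2} \;\leq\; \mathbb{E}[w_t^2]\cdot 1,
\end{equation}
with equality iff $\pi_{t-1}(\mathbf{x}_{1:t-1})\propto g(\mathbf{x}_{1:t-1})$. Replacing $\tilde{\pi}_t$ by $\pi_t$ in $g$ only changes the proportionality constant, yielding exactly the claimed formula. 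Equivalently, one can verify this critical point via a Lagrange multiplier argument on the constrained objective.

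The main obstacle I anticipate is bookkeeping around the normalization: since both the sampling law $\pi_{t-1}$ and the weight formula contain $\tilde{\pi}_{t-1}$, one must be careful to show that the normalization factor $Z^{\pi}_{t-1}$ cancels (or is absorbed into the proportionality), so that the variance problem genuinely decouples into a clean functional minimization in $\pi_{t-1}$ alone. Once this is handled, the Cauchy--Schwarz step is routine.
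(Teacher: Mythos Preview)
Your proposal is correct and follows essentially the same route as the paper: both arguments observe that the mean of $w_t$ is independent of $\pi_{t-1}$, reduce the problem to minimizing $\sum_{\mathbf{x}_{1:t-1}} g(\mathbf{x}_{1:t-1})^2/\pi_{t-1}(\mathbf{x}_{1:t-1})$ subject to $\sum\pi_{t-1}=1$, and then identify the minimizer. The only difference is that the paper carries out the last step with a Lagrange multiplier (which you also mention as an alternative), while you use Cauchy--Schwarz; these are interchangeable textbook devices for this particular constrained minimization and do not constitute a substantively different approach.
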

 Taking $t=T$ and $q=p$ implies $\pi_{T-1}^p(\mathbf{x}_{1:T-1})\propto p(\mathbf{x}_{1:T-1})\sqrt{\sum_{\mathbf{x}_{T}}p(\mathbf{x}_T|\mathbf{x}_{1:T-1})\phi(\texttt{Ans}(\mathbf{x}_{1:T}))}$. Here we apply the fact that $\phi(\texttt{Ans}(\mathbf{x}_{1:T}))^2=\phi(\texttt{Ans}(\mathbf{x}_{1:T}))$ as it is binary. If we fix the intermediate target as the choice above, we could further propagate the derivation to previous steps by recursively applying Proposition \ref{prop:optimal_target}, getting the locally optimal  intermediate targets for $t<T$ as
\begin{equation}
    \pi_{t}^p(\mathbf{x}_{1:t})\propto p(\mathbf{x}_{1:t})\sqrt{\sum_{\mathbf{x}_{t+1:T}}p(\mathbf{x}_{t+1:T}|\mathbf{x}_{1:t})\phi(\texttt{Ans}(\mathbf{x}_{1:T}))}.
\end{equation}
 In particular, $\sum_{\mathbf{x}_{t+1:T}}p(\mathbf{x}_{t+1:T}|\mathbf{x}_{1:t})\phi(\texttt{Ans}(\mathbf{x}_{1:T}))$  actually represents the value function $V^p(\mathbf{x}_{1:t})$ in reinforcement learning \citep{ouyang2022traininglanguagemodelsfollow}. Hence, the locally optimal twists are given by
\begin{equation}
\label{eq:twist}
    \psi^p_t(\mathbf{x}_{1:t}) \propto \sqrt{V^p(\mathbf{x}_{1:t})}.
\end{equation}

\subsection{Connection with the PRM}
\label{subsec:connection}
 Based on our above choices, the incremental importance weights in Equation \ref{eq:actual_weights} becomes
\begin{equation}
    \label{eq:optimal_weigths}
    w^p_t(\mathbf{x}_{1:t})=\frac{\psi^p_t(\mathbf{x}_{1:t})}{\psi^p_{t-1}(\mathbf{x}_{1:t-1})}\propto\sqrt{\frac{V^p(\mathbf{x}_{1:t})}{V^p(\mathbf{x}_{1:t-1})}}.
\end{equation}

The incremental importance weight could also be treated as a measurement of the step quality, similar to the process reward in PRMs. To further augment this connection, note that
\begin{equation}
\prod_{t=1}^Tw^p(\mathbf{x}_{1:t})=\prod_{t=1}^T\frac{\psi^p_t(\mathbf{x}_{1:t})}{\psi^p_{t-1}(\mathbf{x}_{1:t-1})}=\frac{\psi^p_T(\mathbf{x}_{1:T})}{\psi^p_{0}(\mathbf{x}_{0})}=\phi(\texttt{Ans}(\mathbf{x}_{1:T})),
\end{equation}
which is in the same format as the PRM with \texttt{prod} aggregation. The key observation here is that TSMC always yields an unbiased estimator of the importance weight $\phi(\texttt{Ans}(\mathbf{x}_{1:T}))$, when there is no estimation error of $V^p$. We continue to compare this estimator to some existing PRMs.
\paragraph{The PRM learned through automatic supervision.} 
This class of PRMs \citep{wang2023making, luo2024improvemathematicalreasoninglanguage} computes the process reward by evaluating the value function at each partial solution with respect to a roll-out policy $\mu$. The solution confidence score will be computed as
\begin{equation}
    r_{PRM}(\mathbf{x}_{1:T})=\texttt{Aggr}(\{r(\mathbf{x}_t|\mathbf{x}_{1:t-1})\}_{t=1}^T) =\texttt{Aggr}(\{V^{\mu}(\mathbf{x}_{1:t})\}_{t=1}^T). 
\end{equation}
However, such an estimator is always biased no matter \texttt{min} or \texttt{prod} is used for aggregation.
\paragraph{The PRM learned through human supervision.} The human supervision is generated through the logical sense of step correctness. We formally establish the definition of step correctness in Definition \ref{def:correctness} and prove the following proposition in Appendix \ref{subsec:ground_prm}. 
\begin{restatable}{proposition}{prm}
\label{prop:prm}
    The PRM corresponding to the step correctness can be expressed as
\begin{equation}
r_{PRM}(\mathbf{x}_t|\mathbf{x}_{1:t-1})=\mathbb{I}(\sigma(\mathbf{x}_{1:t})>0).
\end{equation}
\end{restatable}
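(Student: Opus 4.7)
The plan is to unfold the definition of step correctness (Definition \ref{def:correctness}) and then rewrite the marginal $\sigma(\mathbf{x}_{1:t})$ in a way that makes the equivalence transparent. Based on the surrounding discussion, a step $\mathbf{x}_t$ following the prefix $\mathbf{x}_{1:t-1}$ is logically correct precisely when (i) the prefix itself is reachable under the generator, and (ii) the extended prefix $\mathbf{x}_{1:t}$ admits at least one continuation that produces the ground-truth answer. The ground-truth PRM is then the indicator of this event, so the task reduces to showing that this event is exactly $\{\sigma(\mathbf{x}_{1:t})>0\}$.

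The main step is the algebraic manipulation
\begin{equation*}
\sigma(\mathbf{x}_{1:t}) \;=\; \sum_{\mathbf{x}_{t+1:T}} \sigma(\mathbf{x}_{1:T}) \;=\; \frac{p(\mathbf{x}_{1:t})}{Z^{\sigma}} \sum_{\mathbf{x}_{t+1:T}} p(\mathbf{x}_{t+1:T}\mid \mathbf{x}_{1:t})\,\phi(\texttt{Ans}(\mathbf{x}_{1:T})) \;=\; \frac{p(\mathbf{x}_{1:t})\,V^p(\mathbf{x}_{1:t})}{Z^{\sigma}},
\end{equation*}
using the factorization $p(\mathbf{x}_{1:T})=p(\mathbf{x}_{1:t})p(\mathbf{x}_{t+1:T}\mid\mathbf{x}_{1:t})$ and the identification of the inner sum with the value function $V^p$. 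Since $Z^{\sigma}>0$ whenever the target is well defined, positivity of $\sigma(\mathbf{x}_{1:t})$ is equivalent to $p(\mathbf{x}_{1:t})>0$ and $V^p(\mathbf{x}_{1:t})>0$ jointly. Because $\phi\in\{0,1\}$ and probabilities are nonnegative, $V^p(\mathbf{x}_{1:t})>0$ is in turn equivalent to the existence of a completion $\mathbf{x}_{t+1:T}$ with $p(\mathbf{x}_{t+1:T}\mid\mathbf{x}_{1:t})>0$ and $\phi(\texttt{Ans}(\mathbf{x}_{1:T}))=1$.

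Putting the two directions together, $\sigma(\mathbf{x}_{1:t})>0$ holds iff the prefix $\mathbf{x}_{1:t-1}$ is reachable under $p$, $p(\mathbf{x}_t\mid\mathbf{x}_{1:t-1})>0$, and at least one correct completion exists from $\mathbf{x}_{1:t}$, which matches precisely the notion of step correctness from Definition \ref{def:correctness}. Hence $r_{PRM}(\mathbf{x}_t\mid\mathbf{x}_{1:t-1})=\mathbb{I}(\sigma(\mathbf{x}_{1:t})>0)$, as claimed.

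The only real obstacle is bookkeeping around Definition \ref{def:correctness}: I have to be careful that the definition conditions on $\mathbf{x}_{1:t-1}$ being itself correct (i.e.\ reachable and extendable to a correct answer), so that when forming $\mathbb{I}(\sigma(\mathbf{x}_{1:t})>0)$ the condition on the prefix is already baked in and no extra assumption on $\mathbf{x}_{1:t-1}$ is needed. If the definition requires the conditional $r_{PRM}(\mathbf{x}_t\mid\mathbf{x}_{1:t-1})$ to be interpreted only over prefixes with $\sigma(\mathbf{x}_{1:t-1})>0$, the equivalence is immediate; otherwise a short remark is needed to handle degenerate prefixes, where both sides vanish by convention.
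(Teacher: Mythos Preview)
Your argument is correct and matches the paper's approach: the paper's proof also reduces step correctness (via Definition~\ref{def:correctness}) to the statement ``there exists at least one correct solution containing $\mathbf{x}_{1:t}$,'' then identifies this with $\sum_{\mathbf{x}_{t+1:T}}\sigma(\mathbf{x}_{1:T})>0=\mathbb{I}(\sigma(\mathbf{x}_{1:t})>0)$. Your factorization through $V^p$ is a slightly more explicit version of the same one-line computation, and your worry about degenerate prefixes is exactly the convention the paper adopts (``we treat all steps as incorrect if they follow an incorrect step'').
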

Therefore, the solution confidence score from this PRM is always an unbiased estimator of $\phi(\texttt{Ans}(\mathbf{x}_{1:T}))$ for both \texttt{min} and \texttt{prod} aggregation. However, using $\mathbb{I}(\sigma(\mathbf{x}_{1:t})>0)$ for the intermediate target, as tried by \citet{uesato2022solving}, does not effectively reduce the sampling variance or improve verification performance since it ignores the likelihood of the proposal $p(\mathbf{x}_{t+1:T}|\mathbf{x}_{1:t})$.

\subsection{Value Function Estimation}
The  approximation of $\{\psi_t^p\}_{t=1}^{T-1}$ and $r^*$ can be consolidated into a single learning task: estimating the value function $V^p$. We therefore use a single neural model parameterized by $\theta$ for the approximation.  Estimating the value function through independently sampled data from a fixed policy (generator) is a well-studied topic \citep{bertsekas2012dynamic}. It therefore eliminates the need for explicit process supervision during training, as outlined in Problem II.

In this paper, we adopt the Contrastive Twist Learning (CTL) method developed by \citet{zhao2024probabilistic}. It is hard to directly approximate $\pi_t^p$ via Monte Carlo sampling, so we instead approximate the target marginal $\sigma(\mathbf{x}_{1:t})$ to estimate the value function, and take the square root of the estimated value function during the inference time.  Let $V^{\theta}$ be our estimated  value function of $V^p$, and define the intermediate target  $\pi_t^{\theta}(\mathbf{x}_{1:t})=\frac{p(\mathbf{x}_{1:t})V^{\theta}(\mathbf{x}_{1:t})}{Z_t^{\pi}(\mathbf{x}_{1:t})}$. CTL minimizes the KL divergence between the target marginal distributions and the  intermediate targets: 
\begin{equation}
\min_{\theta}L_{CTL}(\theta)=\min_{\theta}\sum_{t=1}^T D_{KL}(\sigma(\mathbf{x}_{1:t})\| \pi^{\theta}_t(\mathbf{x}_{1:t})),  
\end{equation}
whose gradient at $t$-th step can be derived as 
\begin{equation}
    \mathbb{E}_{\sigma(\mathbf{x}_{1:t})}[\nabla_{\theta}\log V^{\theta}(\mathbf{x}_{1:t})]-\mathbb{E}_{\pi_t^{\theta}(\mathbf{x}_{1:t})}[\nabla_{\theta}\log V^{\theta}(\mathbf{x}_{1:t})].
\end{equation}
We approximate the gradient in the first term via rejection sampling, and the gradient in the second term via importance sampling, as done in Equation \ref{eq:is_monte}. We include more training details in Appendix \ref{subsec:value_train} and summarize our entire TSMC-based verification algorithm in Appendix \ref{sec:pseuocoe}.

\section{Experiments}
\label{sec:experiment}
\subsection{Experimental Setup} 
We briefly outline our experimental setup in this section and include more details in Appendix \ref{sec:additional_details}.  

\paragraph{Datasets.} Building on prior work \citep{uesato2022solving, lightman2024lets, wang2023making}, we assess our TSMC method using two widely used math datasets: GSM8K \citep{cobbe2021training} and MATH \citep{MATH}. For GSM8K, we evaluate model performance on all testing instances. While for MATH, we follow \citet{lightman2024lets} to select a representative subset of 500 testing instances, referred to as MATH500 in the following text.

\paragraph{Generators.} We fine-tune two solution generators from pretrained Llemma-7B \citep{azerbayev2023llemma} and DeepSeek-7B \citep{deepseek-math}, respectively. Following \citet{sun2024easy}, we use the filtered PRM800K \citep{lightman2024lets} as the supervised fine-tuning dataset.

\paragraph{Baselines.} We compare our method to both non-verification methods, including zero-shot greedy decoding and majority voting (MV) \citep{wang2023selfconsistency}, and verification methods based on weighted majority voting (WMV). For verification methods, we use various types of verifiers, including the ORM, the PRM trained with human supervision on PRM800K \citep{lightman2024lets}), and the PRM trained with automatic supervision on MATH-SHEPHERD \citep{wang2023making}. We employ \texttt{min} for aggregation on both PRMs. For each generator, we use the same pretrained model to train the reward models and value function, with a linear head applied. The problem solving rate (in \%) is used as the comparison metric. 

\paragraph{TSMC details.} Our TSMC is applied at the step level. We implement a warm-up stage that skips resampling in the initial stage, setting this threshold at 50 tokens across all experiments. A maximum of five resampling steps is allowed to reduce the latency. For sequences that terminate early, we assign an incremental importance weight of 1 during the remaining resampling steps. We employ stratified sampling \citep{kitagawa1996stratified} for resampling to reduce variance. Due to the resource constraint, we perform resampling over a mini-batch of $M$ samples instead of the full batch of $N$ solutions. The batch set is fixed as $M=40$ by default in the following experiments.

\subsection{Main Results}
To verify whether TSMC really improves the sampling efficiency with better solution quality, we also perform the majority voting on the solutions generated by TSMC. We denote this method as TSMC + MV, and the complete TSMC  as TSMC + WMV. We present our main results in Table \ref{tab:main}.

\begin{table}[ht!]
    \centering
     \caption{Comparative results in the problem solving rate (\%) on GSM8K and MATH500 datasets. We use two generators fine-tuned from pretrained Llemma-7B and DeepSeek-7B, respectively. We bold the best results in each category. The voting is performed on 240 samples.}
     \vspace{5pt}
    \label{tab:main}
    \begin{tabular}{llcc}
    \toprule
    \textbf{Generators} & \textbf{Methods} & \textbf{GSM8K}& \textbf{MATH500} \\ 
        \midrule
    \multirow{5}{*}{Llemma-7B} & Greedy & 38.2 & 19.4 \\
    
              & MV & 72.5& 41.2  \\
              & WMV w. ORM & 78.7 & 43.0 \\
              & WMV w. PRM (PRM800K) & 73.6 & 43.2\\
              & WMV w. PRM (SHEPHERD)& 79.2 & 43.6 \\
              \cline{2-4}
              & TSMC + MV (Ours) & 78.1 & 44.2 \\
              & TSMC + WMV  (Ours)   & \textbf{80.4} &  
              \textbf{46.4}  \\
    \midrule
    \multirow{5}{*}{DeepSeek-7B} & Greedy & 61.2 & 30.8 \\
              & MV & 86.4 & 52.8 \\
              & WMV w. ORM & 86.6 & 55.0  \\
              & WMV w. PRM (PRM800K) & 87.0 &  55.2 \\
              & WMV w. PRM (SHEPHERD) & 89.5&  52.6   \\
              \cline{2-4}
              & TSMC + MV (Ours)     & 89.5 & 55.6 \\
              & TSMC + WMV (Ours)    & \textbf{91.7} &  \textbf{60.8} \\
    \bottomrule   
    \vspace{-12pt}
     \end{tabular}
\end{table}

It is evident that TSMC + MV demonstrates a significant improvement over vanilla MV, highlighting its effectiveness in enhancing the overall solution quality.  Moreover, TSMC consistently outperforms other methods in terms of final verification performance. It is worth noting that the final verification step in TSMC operates independently of the generator, meaning that a better reward model could further improve TSMC's performance in WMV. Overall, our TSMC-based verification method shows a clear advantage over existing verification methods, with a simpler training target.

\subsection{Impact of Bias on Estimators}
TSMC is characterized by its unbiased estimation of the importance weight, which is $\phi(\texttt{Ans}(\mathbf{x}_{1:T}))$ in our task. However, since the training error is unavoidable in practice, it remains unclear whether such an unbiased estimation in the theoretical optimal case is meaningful. We therefore examine this problem by comparing different biased and unbiased estimators analyzed in Section \ref{subsec:connection}.

We consider both the PRM predicting step correctness, i.e., \textit{PRM (PRM800K)}, and the PRM estimating the value function, including \textit{PRM (SHEPHERD)} and the estimated value function in TSMC, abbreviated as \textit{Value (TSMC)}. PRM (PRM800K) and PRM (SHEPHERD) are applied to independently generated solutions, while Value (TSMC) is applied to TSMC-generated solutions, treating estimated values as process rewards. Beyond $\texttt{min}$ and $\texttt{prod}$, we also consider aggregating the process rewards by taking last-step reward only, yielding an ORM-like unbiased estimator:
\begin{equation}
       \texttt{last}=r_{PRM}(\mathbf{x}_T|\mathbf{x}_{1:T-1}).
\end{equation}
Exceptionally, since the solutions in TSMC have already gone through the resampling process, we use the last incremental weight for \texttt{last} in this scenario, recovering the original TSMC process as an unbiased estimator. We compare all estimators in Figure \ref{fig:bias}, under different settings.
\begin{figure}[ht!]
    \centering
    \includegraphics[width=\linewidth]{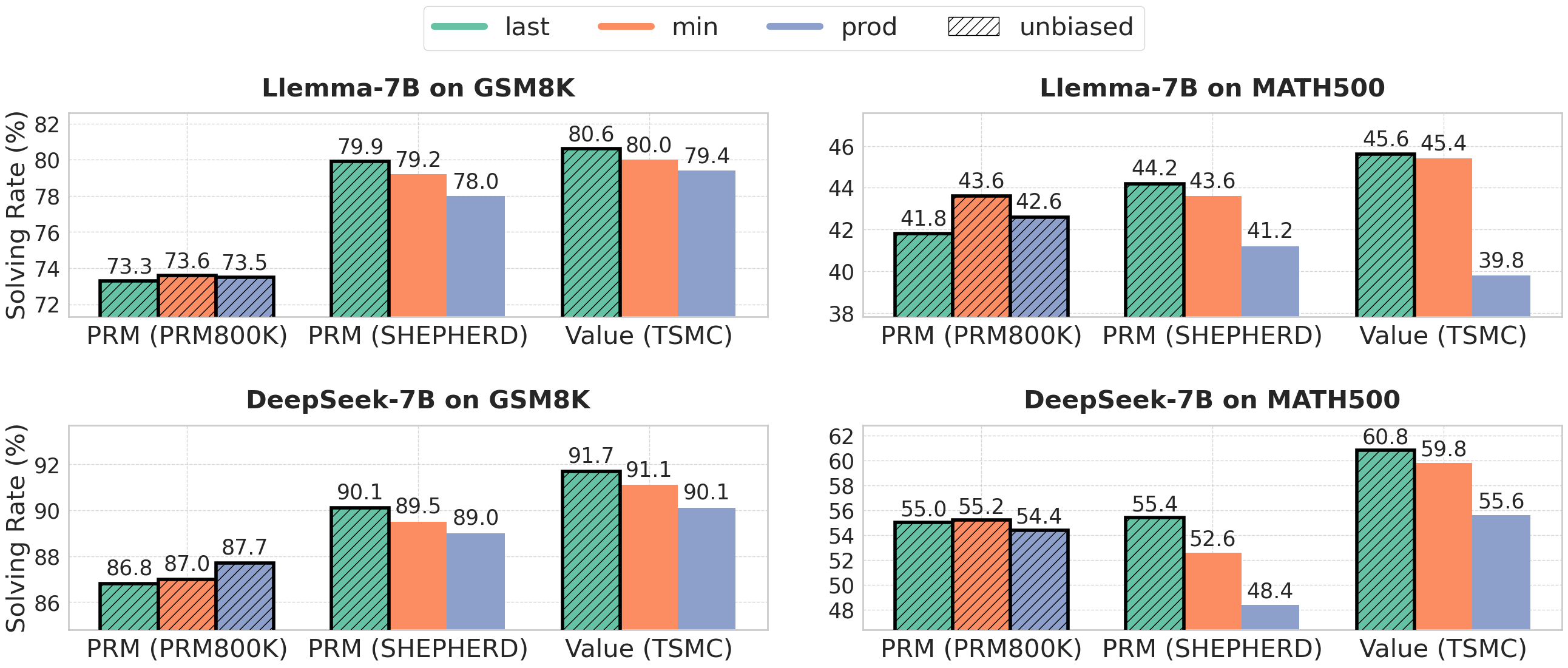} 
    \caption{Comparison among all biased and unbiased estimators of the importance weight.}
    \label{fig:bias}

\end{figure}

The trend is highly consistent when the estimated value is used as the process reward, i.e., in PRM (SHEPHERD) and Value (TSMC). In this case, \texttt{prod} consistently exhibits  poor performance across all settings. Since the value at each step lies in $[0,1]$, the product of values is highly biased towards the solutions with fewer steps. In contrast, \texttt{min} could overcome such a bias as its value is insensitive to the number of steps. Its clear advantage against \texttt{prod} is also in line with the choice of \citep{wang2023making}. However, the performance is still consistently worse than the unbiased estimator using \texttt{last}. A different pattern shows up in PRM (PRM800K), where the estimated step correctness is used as the process reward. The \texttt{min} strategy still achieves the overall best result, but \texttt{prod} is also comparably good. We find no advantage of \texttt{last} in this case, as all estimators are unbiased. Instead, \texttt{prod} and \texttt{min} would benefit from ensemble modeling by aggregating multiple scores.

We find our results consistent with the observation from \citet{sun2024easy}. Basically, the advantage of PRM against ORM is enduring only when both are unbiased estimators of $\phi(\texttt{Ans}(\mathbf{x}_{1:T}))$. When PRM is biased, there is no clear guarantee of better performance against ORM, which is always an unbiased estimator.  TSMC instead assimilates the strength of the unbiased estimation from ORM and the intermediate-step modeling from PRM, leading to the best performance. 

\subsection{Impact of Variance on Estimators}
Besides of being unbiased, TSMC reduces the variance of importance weight through informative twists. To investigate the impact of variance, we consider the following variants of TSMC: using the step correctness predicted by PRM (PRM800K) as the incremental importance weight; using the process reward in PRM (SHEPHERD) as the estimated value function; and using $V^p(\mathbf{x}_{1:t})$ rather than $\sqrt{V^p(\mathbf{x}_{1:t})}$ for the twist, which approximates the target marginal $\sigma(\mathbf{x}_{1:t})$ as the intermediate target.
Taking Llemma-7B on MATH500 as an example, we compare their performance in Figure \ref{fig:variance}. 

\begin{figure}[ht!]
\begin{minipage}{0.48\textwidth}
\centering
    \includegraphics[width=0.9\linewidth]{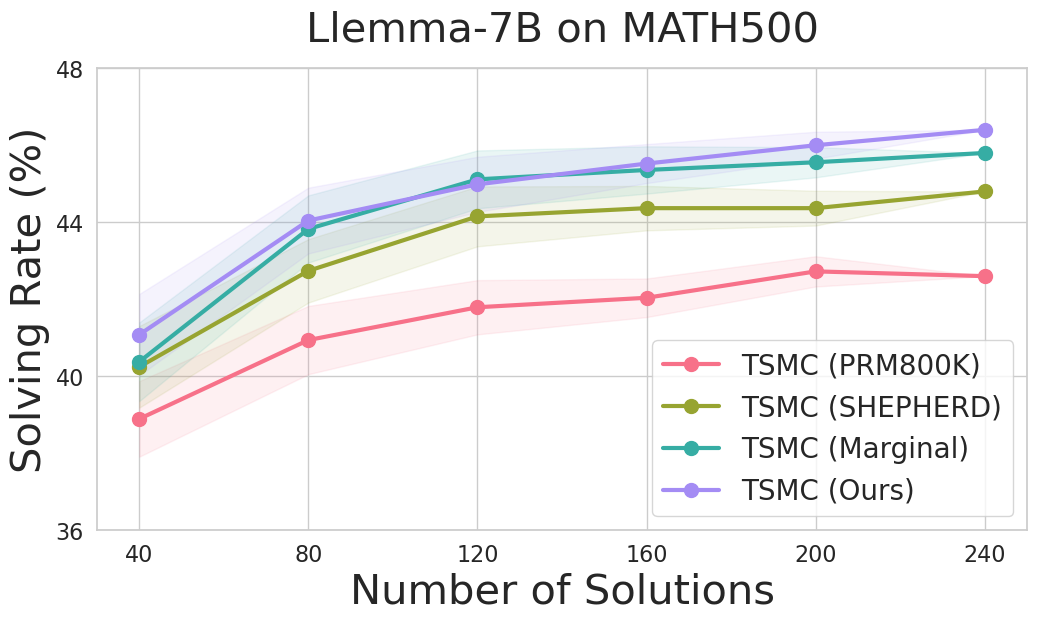}
    \caption{TSMC with different intermediate targets. Variance are visualized across many sub-samples of the 240 solutions per problem. } 
        \label{fig:variance}
\end{minipage}
\hfill
\begin{minipage}{0.48\textwidth}
\centering
    \includegraphics[width=.9\linewidth]
{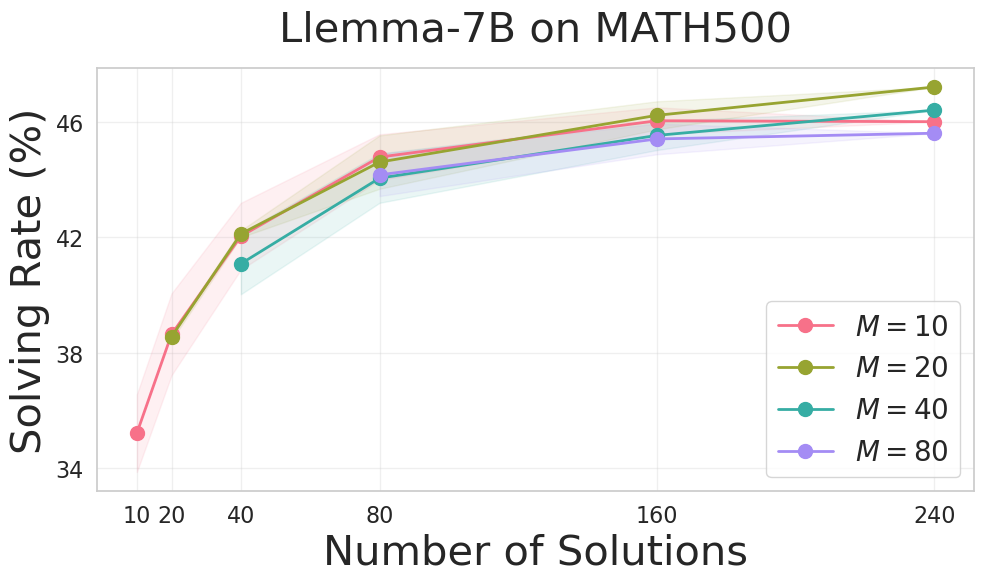}
    \caption{Ablation study on the TSMC batch size. Variance are visualized across many sub-samples of the 240 solutions per problem. } 
    \label{fig:smc_n}
\end{minipage}
\end{figure}

As noted in \citet{uesato2022solving}, the use of step correctness to guide intermediate decoding does not enhance performance because it disregards the likelihood from the generator. In contrast, PRM (SHEPHERD) offers a more informative guidance through its estimated value function. Nonetheless, this value is assessed using a different generator $\mu$, resulting in inferior performance compared to employing the approximated $V^p$ as the twist. Finally, utilizing $V^p(\mathbf{x}_{1:t})$ as the twist yields poorer results than using $\sqrt{V^p(\mathbf{x}_{1:t})}$, highlighting the importance of optimizing variance. 

\subsection{Sensitivity Analysis of Model Performance to TSMC Batch Size}
\label{sub_sec:sensitivity}
Different from standard decoding, TSMC is a batch decoding method whose performance is dependent on the batch size $M$. Notably, when $M=1$, TSMC is reduced to standard decoding.  To explore how sensitive TSMC is to its batch size, we vary $M$ over $\{10,20,40,80\}$ and visualize the comparative results in Figure \ref{fig:smc_n}.  As illustrated, the overall variance stays within a reasonable range, and all outcomes significantly outperform the baselines in Table \ref{tab:main}. Consequently, we claim that TSMC is robust to changes in its batch size, provided that the batch size is not too small. 

\section{Related Work}
\paragraph{Verification for reasoning.}
Verification has proven to be an effective approach in enhancing the multi-step reasoning ability of LLMs. Two widely adopted verification methods are Outcome Reward Model (ORM) \citep{cobbe2021training} and Process Reward Model (PRM) \citep{uesato2022solving}. Although empirical evidence suggests that PRM outperforms ORM \citep{lightman2024lets}, training PRM presents a significant challenge due to the need for process supervision, which is often difficult to obtain. Therefore, recent research has increasingly focused on automatic supervision to train PRMs more efficiently \citep{wang2023math, luo2024improvemathematicalreasoninglanguage, wang2024multistepproblemsolvingverifier}. 

\paragraph{(Twisted) Sequential Monte Carlo.}
Sequential Monte Carlo (SMC) is a generic statistical inference approach that has been widely applied in various domains, including signal processing \citep{doucet2009tutorial, godsill2004monte}, financial econometrics \citep{JOHANNES20101, creal2012economics}, and robotics \citep{michael2002fastslam, bailey2006simul, thrun2005probabilistic}.  Recently, SMC has been integrated with neural networks to enhance sequential generative models, such as diffusion models \citep{trippe2023diffusion, wu2023fine} and LLMs \citep{lew2023sequentialmontecarlosteering, zhao2024probabilistic}. The most relevant work to ours is \citet{zhao2024probabilistic}, which presents a general framework for controlled text generation using Twisted Sequential Monte Carlo (TSMC). However, our work primarily focuses on multi-step reasoning, under the assumption of a weak generator and long reasoning process. Furthermore, we are the first to bridge TSMC with the predominant verification methods, offering a novel theoretical perspective for interpretability.

\section{Conclusion \& Limitation}
In this paper, we introduce a novel verification method for multi-step reasoning using Twisted Sequential Monte Carlo (TSMC). Our approach sequentially approximates intermediate targets, enhancing the reasoning process of large language models, and improving both solution quality and sampling efficiency. By incorporating step-wise guidance that is learned without human supervision, our method provides a scalable framework for various multi-step reasoning tasks.

Although promising, our method could potentially suffer from the broken parallelism caused by the variable step length, leading to additional inference latency. One possible solution involves blockwise resampling over steps with a fixed number of tokens. Future work could also explore the impact of TSMC batch size and refine algorithmic design for further efficiency gains.

\bibliography{main}
\bibliographystyle{iclr2025_conference}

\appendix

\newpage
\section{Proofs}
Here we include the proofs for all the propositions present in the main paper.
\subsection{Proof for Answer Weight Variance}
\label{subsec:answer_variance}
\var*

\begin{proof}
For simplicity, denote $f_a(\mathbf{x}_{1:T})=\mathbb{I}(\texttt{Ans}(\mathbf{x}_{1:T})=a)$. Using the fact that $f_a(\mathbf{x}_{1:T})^2=f_a(\mathbf{x}_{1:T})$ and $\sum_{a\in\mathcal{A}}f_a(\mathbf{x}_{1:T})=1$, we have
\begin{equation}
\begin{split}
           \sum_{a\in\mathcal{A}} \mathbb{V}_q[\frac{\tilde{\sigma}(\mathbf{x}_{1:T})f_a(\mathbf{x}_{1:T})}{q(\mathbf{x}_{1:T})}] &=\sum_{a\in\mathcal{A}}(\mathbb{E}_q[       (\frac{\tilde{\sigma}(\mathbf{x}_{1:T})f_a(\mathbf{x}_{1:T})}{q(\mathbf{x}_{1:T})})^2]-\mathbb{E}_q[\frac{\tilde{\sigma}(\mathbf{x}_{1:T})f_a(\mathbf{x}_{1:T})}{q(\mathbf{x}_{1:T})}]^2) \\
           &=\sum_{a\in\mathcal{A}}(\sum_{\mathbf{x}_{1:T}} \frac{\tilde{\sigma}(\mathbf{x}_{1:T})^2f_a(\mathbf{x}_{1:T})^2}{q(\mathbf{x}_{1:T})}-(\sum_{\mathbf{x}_{1:T}}\tilde{\sigma}(\mathbf{x}_{1:T})f_a(\mathbf{x}_{1:T}))^2)\\
           &=\sum_{\mathbf{x}_{1:T}} \frac{\tilde{\sigma}(\mathbf{x}_{1:T})^2\sum_af_a(\mathbf{x}_{1:T})^2}{q(\mathbf{x}_{1:T})}-\sum_{a\in\mathcal{A}}(\sum_{\mathbf{x}_{1:T}}\tilde{\sigma}(\mathbf{x}_{1:T})f_a(\mathbf{x}_{1:T}))^2\\
           &=\sum_{\mathbf{x}_{1:T}} \frac{\tilde{\sigma}(\mathbf{x}_{1:T})^2}{q(\mathbf{x}_{1:T})}-\sum_{a\in\mathcal{A}}(\sum_{\mathbf{x}_{1:T}}\tilde{\sigma}(\mathbf{x}_{1:T})f_a(\mathbf{x}_{1:T}))^2\\
           &=\mathbb{V}_q[\frac{\tilde{\sigma}(\mathbf{x}_{1:T})}{q(\mathbf{x}_{1:T})}]+C.
\end{split}
\end{equation}    
Here, $C=(\sum_{\mathbf{x}_{1:T}}\tilde{\sigma}(\mathbf{x}_{1:T}))^2-\sum_{a\in\mathcal{A}}(\sum_{\mathbf{x}_{1:T}}\tilde{\sigma}(\mathbf{x}_{1:T})f_a(\mathbf{x}_{1:T}))^2$ is independent of $q(\mathbf{x}_{1:T})$.
\end{proof}

\subsection{Proof for Locally Optimal Intermediate Targets}
\label{subsec:local_optimal}
\intermediate*
\begin{proof}
    Note that we have the expectation of importance weight 
    \begin{equation}
    \mathbb{E}_{p,\pi_{t-1}}[\frac{\pi_t(\mathbf{x}_{1:t})}{p(\mathbf{x}_t|\mathbf{x}_{1:t-1})\pi_{t-1}(\mathbf{x}_{1:t-1})}]=1.
    \end{equation}
    Using the fact that $\mathbb{V}[X]=\mathbb{E}[X^2]-\mathbb{E}[X]^2$, we write the variance of the importance weight as 
    \begin{equation}
        \mathbb{V}_{p,\pi_{t-1}}[\frac{\pi_t(\mathbf{x}_{1:t})}{q(\mathbf{x}_t|\mathbf{x}_{1:t-1})\pi_{t-1}(\mathbf{x}_{1:t-1})}]=\mathbb{E}_{q,\pi_{t-1}}[(\frac{\pi_t(\mathbf{x}_{1:t})}{q(\mathbf{x}_t|\mathbf{x}_{1:t-1})\pi_{t-1}(\mathbf{x}_{1:t-1})})^2]-1
    \end{equation}
    Minimizing the variance is therefore equivalent to minimizing $\mathbb{E}_{q,\pi_{t-1}}[(\frac{\pi_t(\mathbf{x}_{1:t})}{p(\mathbf{x}_t|\mathbf{x}_{1:t-1})\pi_{t-1}(\mathbf{x}_{1:t-1})})^2]$. Subject to the constraint of the probability, we introduce the Lagrange multiplier $\lambda$ in our objective:
    \begin{equation}
        \begin{split}
            &\min_{\pi_{t-1}} \mathbb{E}_{q,\pi_{t-1}}[(\frac{\pi_t(\mathbf{x}_{1:t})}{q(\mathbf{x}_t|\mathbf{x}_{1:t-1})\pi_{t-1}(\mathbf{x}_{1:t-1})})^2] + \lambda(\sum_{\mathbf{x}_{1:t-1}}\pi_{t-1}(\mathbf{x}_{1:t-1})-1)\\
    =&\min_{\pi_{t-1}} \sum_{\mathbf{x}_{1:t}}\frac{\pi_t(\mathbf{x}_{1:t})^2}{q(\mathbf{x}_t|\mathbf{x}_{1:t-1})\pi_{t-1}(\mathbf{x}_{1:t-1})}+\lambda(\sum_{\mathbf{x}_{1:t-1}}\pi_{t-1}(\mathbf{x}_{1:t-1})-1).
        \end{split}
    \end{equation}
    Taking $\frac{(\cdot)}{\pi_{t-1}(\mathbf{x}_{1:t-1})}=0$,  we get
    \begin{equation}
        -\sum_{x_t}\frac{\pi_t(\mathbf{x}_{1:t})^2}{q(\mathbf{x}_t|\mathbf{x}_{1:t-1})\pi^2_{t-1}(\mathbf{x}_{1:t-1})}+\lambda = 0.
    \end{equation}
    This yields the optimal $\pi_{t-1}^q$ given $q$ as
    \begin{equation}
        \pi^q_{t-1}(\mathbf{x}_{1:t-1}) = \frac{1}{\lambda}\sqrt{\sum_{\mathbf{x}_t}\frac{\pi_t(\mathbf{x}_{1:t})^2}{q(\mathbf{x}_t|\mathbf{x}_{1:t-1})}},
    \end{equation}
    where $\lambda$ is chosen to normalize the densities to have a sum of $1$. Especially, when $\pi_t$ and q correspond to the optimal choices in Equation \ref{eq:optim}, we recover 
    \begin{equation}
    \pi^q_{t-1}(\mathbf{x}_{1:t-1})=\frac{1}{\lambda}\sqrt{\sum_{\mathbf{x}_t} \sigma(\mathbf{x}_{1:t})\sigma(\mathbf{x}_{1:t-1})}=\sigma(\mathbf{x}_{1:t-1})=\pi^*_{t-1}(\mathbf{x}_{1:t-1}).
    \end{equation}
\end{proof} 

\subsection{Proof for the PRM based on Step Correctness}
\label{subsec:ground_prm}
Although the human-annotated process supervision is informally understood as the correctness of a step, there lacks a formal definition of this concept in existing literature \citep{uesato2022solving, lightman2024lets, wang2023making, luo2024improvemathematicalreasoninglanguage}. Here we formally establish the definition of step correctness as follows.

\begin{definition} 
\label{def:correctness}
Each step $\mathbf{x}_t|\mathbf{x}_{1:t-1}$ is either correct or incorrect, following the two axioms below:
    \begin{itemize}
    \item For any solution $\mathbf{x}_{1:T}$, if $\mathbf{x}_t|\mathbf{x}_{1:t-1}$ is correct for $t=1,\cdots,T$, then $\phi(\texttt{Ans}(\mathbf{x}_{1:T}))=1$.
    \item For any solution $\mathbf{x}_{1:T}$, if any step $\mathbf{x}_t|\mathbf{x}_{1:t-1}$ is incorrect, then $\phi(\texttt{Ans}(\mathbf{x}_{1:T}))=0$.
\end{itemize}
\end{definition}

Based on Definition \ref{def:correctness} we proceed to prove Proposition \ref{prop:prm}.

\prm*
\begin{proof}
According to Definition \ref{def:correctness}, a step is correct as long as it is contained by at least one correct solution. Otherwise, it would be incorrect. Therefore, the process reward in this case corresponds to
\begin{equation}
    r_{PRM}(\mathbf{x}_t|\mathbf{x}_{1:t-1})= \mathbb{I}(\sum_{x_{t+1:T}}\sigma(\mathbf{x}_{1:T})>0)=\mathbb{I}(\sigma(\mathbf{x}_{1:t})>0).
\end{equation}
Here, we treat all steps as incorrect if they follow an incorrect step. In \citet{lightman2024lets}, a step could still be labeled correct even if its previous steps are incorrect. This is because logical thinking is not always in a linear dependency, i.e. a future step is not necessarily dependent on all steps prior to it. However, how to label these steps is up to inductive bias, which does not affect the solution score in the theoretical optimal case.

\end{proof}

\section{Pseudocode for TSMC}
\label{sec:pseuocoe}
Here we summarize the pseudocode for our TSMC-based verification method, where $\texttt{CONCAT}(\cdot)$ represents the concatenation function, i.e., appending a new element to a list.
\begin{algorithm}[H]
\caption{TSMC for Verification}
\label{alg:tsmc}
\begin{algorithmic}[1] 
\STATE \textbf{Input}: Generator $p$, estimated value function $V^{\theta}$
\FOR{$t=1,\dots,T$}
\FOR{$i=1,\cdots,N$}
\STATE \texttt{\# Sample the next step}
\STATE $\mathbf{x}^i_t\sim p(\cdot|\mathbf{x}^i_{1:t-1})$
\STATE \texttt{\# Concatenate the sampled step to the partial sequence}
\STATE $\mathbf{x}^i_{1:t} \leftarrow \texttt{CONCAT}(\mathbf{x}^i_{1:t-1}, \mathbf{x}^i_{t})$
\STATE \texttt{\# Evaluate the incremental importance weight}
\IF{$t<T$}
\STATE $w(\mathbf{x}^i_{1:t})\leftarrow\sqrt{\frac{V^{\theta}(\mathbf{x}^i_{1:t})}{V^{\theta}(\mathbf{x}^i_{1:t-1})}}$
\ELSE
\STATE $w(\mathbf{x}^i_{1:T})\leftarrow\frac{V^{\theta}(\mathbf{x}^i_{1:T})}{\sqrt{V^{\theta}(\mathbf{x}^i_{1:T-1})}}$
\ENDIF
\ENDFOR
\IF{$t<T$}
\FOR{$i=1,\cdots,N$}
\STATE \texttt{\# Resample the sequences}
\STATE $\omega^i\sim \texttt{Cat}(\{\frac{w_t({\mathbf{x}^i_{1:t})}}{\sum_{j=1}^Nw_t({\mathbf{x}^j_{1:t}})})\}_{i=1}^N)$ 
\STATE  $\mathbf{x}^i_{1:t}\leftarrow \mathbf{x}^{\omega_i}_{1:t}$ 
\ENDFOR
\ENDIF
\ENDFOR
\STATE  \texttt{\# Create a dictionary to store the voting weight}
\STATE $W\leftarrow \{\}$
\FOR{$i=1,\cdots,N$}
\STATE \texttt{\# Extract the answer}
\STATE $a^i\leftarrow \texttt{Ans}(\mathbf{x}^i_{1:T})$
\IF{$a^i\in W$}
\STATE \texttt{\# Update the answer voting weight}
\STATE $W[a^i]\leftarrow W[a^i]+w(\mathbf{x}^i_{1:T})$
\ELSE
\STATE $W[a^i]\leftarrow w(\mathbf{x}^i_{1:T})$
\ENDIF
\ENDFOR
\STATE \texttt{\# Voting}
\STATE \textbf{return}: $\arg\max_a W[a]$
\end{algorithmic}
\end{algorithm}

\section{Additional Experimental Details}
\label{sec:additional_details}
We conducted our experiments for Llemma-7B on 8 NVIDIA H100 GPUs, and our experiments for DeepSeek-7B on 4 NVIDIA RTX A6000 GPUs. Since Llemma-7B is generally weaker than DeepSeek-7B, it in general requires more training samples per problem (the batch size $B$) to obtain at least one positive solution. The summary of the model hyperparameters is presented in Table \ref{tab:hyper}, and we include individual details as follows. 
\begin{table}[ht!]
    \centering
    \caption{The summary of training hyperparameters for all models.}
    \vspace{5pt}
    \begin{tabular}{l|c|c|c|c|c}
    \toprule
   \textbf{Llemma-7B}   & Generator &Value  & ORM & PRM (PRM800k) & PRM (SHEPHERD) \\
    \midrule
     Learning rate & $2\times 10^{-5}$  & $10^{-5}$ & $2\times 10^{-5}$  & $2\times 10^{-5}$& $2\times 10^{-5}$\\
     \midrule
    Batch size &128 & 40 & 128 & 128 & 128  \\
\midrule
    \#  Epochs &3& 2 &2 & 2 &2 \\
\midrule
    Warmup ratio &0.2& 0.05 & 0.2 & 0.2 &0.2 \\
    \midrule
    Max. length &\multicolumn{5}{c}{768}\\
    \midrule
    Dtype &   \multicolumn{5}{c}{BF16}\\

    \toprule
     \textbf{Deepseek-7B} & Generator &Value  & ORM & PRM (PRM800k) & PRM (SHEPHERD) \\
    \midrule
     Learning rate & $2\times 10^{-5}$  & $5\times10^{-5}$ & $2\times 10^{-5}$  & $2\times 10^{-5}$& $2\times 10^{-5}$\\
     \midrule
    Batch size &128 & 80 & 128 & 128 & 128  \\
\midrule
    \#  Epochs &3& 2 &2 & 2 &2 \\
\midrule
    Warmup ratio &0.2& 0.10 & 0.2 & 0.2 &0.2 \\
    \midrule
    Max. length & \multicolumn{5}{c}{768}\\
    \midrule
    Dtype &   \multicolumn{5}{c}{BF16}\\
    \bottomrule
    \end{tabular}
    \label{tab:hyper}
\end{table}

\subsection{Generator Training}
We follow \citet{sun2024easy} to fine-tune the generators on a filtered subset from PRM800K \citep{lightman2024lets}.  The hyperparameters are kept the same across the fine-tuning over Llemma-7B \citep{azerbayev2023llemma} and DeepSeek-7B \citep{deepseek-math}. The generators are fixed once the supervised fine-tuning is over and no additional reinforcement learning is applied.

During the inference time, we generate the solution using top-$K$ sampling with $K=20$ and set the temperature as $0.7$. The maximum length of the solution is fixed as $768$.

\subsection{Value Network Training}
\label{subsec:value_train}
 For each math problem in the training dataset, we generate $B$ (the batch size in the above tables) solutions independently with the generator and ignore the problem during training time if all solutions are correct or incorrect. The inference hyperparameters for the generator are kept the same as above.  We use the same way to create the validation set using $500$ validation instances.

We apply CTL loss \citep{zhao2024probabilistic} on the step-level. The steps are separated by double newline indicators, that is, \texttt{\escape{n}\escape{n}}, and the value function is trained on the token corresponding to the second newline indicator, along with the end-of-sentence token \texttt{<eos>}. Since the CTL loss is computed over all solutions to a single problem, we fill each training batch with all $80$ samples collected from that problem. 

Recall that the gradient of the CTL loss at $t$-th step is given by
\begin{equation}
    \mathbb{E}_{\sigma(\mathbf{x}_{1:t})}[\nabla_{\theta}\log V^{\theta}(\mathbf{x}_{1:t})]-\mathbb{E}_{\pi_t^{\theta}(\mathbf{x}_{1:t})}[\nabla_{\theta}\log V^{\theta}(\mathbf{x}_{1:t})].
\end{equation}
We approximate the gradient in the first term via rejection sampling, while the gradient in the second term via IS. The first term is approximated as $\sum_{i=1}^B \frac{\phi(\texttt{Ans}(\mathbf{x}^i_{1:T}))}{\sum_{j=1}^B\phi(\texttt{Ans}(\mathbf{x}^j_{1:T}))}\nabla_{\theta}\log V^{\theta}(\mathbf{x}^i_{1:t})$. In the second term, we first compute the importance weight using the current approximated value function $w^{\theta}_t(\mathbf{x}^i_{1:t})=\frac{V^{\theta}(\mathbf{x}^i_{1:t})}{V^{\theta}(\mathbf{x}^i_{1:t-1})}$, then approximate the expected gradient via IS as $\sum_{i=1}^B \frac{w^{\theta}_t(\mathbf{x}^i_{1:t})}{\sum_{j=1}^Bw^{\theta}_t(\mathbf{x}^j_{1:t})}\nabla_{\theta}\log V^{\theta}(\mathbf{x}^i_{1:t})$. Therefore, we can approximate the gradient of $\theta$ on the training problems as
\begin{equation}
    \nabla_{\theta}L_{CTL}(\theta) \approx \mathbb{E}_{\mathbf{x}_0}[ \sum_{t=1}^T \sum_{i=1}^B (\frac{\phi(\mathbf{x}^i_{1:T})}{\sum_{j=1}^B\phi(\mathbf{x}^j_{1:T})}-\frac{w^{\theta}_t(\mathbf{x}^i_{1:t})}{\sum_{j=1}^Bw^{\theta}_t(\mathbf{x}^j_{1:t})})\nabla_{\theta}\log V^{\theta}(\mathbf{x}^i_{1:t})].
\end{equation}

\subsection{ORM Training}
To ensure a fair comparison, the ORM is trained on the same data used to train and validate our value function, but with a different data processing strategy and training method.

We basically follow the same procedure in \citet{cobbe2021training} to train the ORM. We balance the positive and negative samples in the dataset by selecting the same number of correct and incorrect solutions per problem. The ORM is trained with the binary cross-entropy loss on each token while only the last token is used for prediction during the inference time.   

\subsection{PRM Training}
We use PRM800K \citep{lightman2024lets} and MATH-SHEPHERD \citep{wang2023making} datasets to train two PRMs separately. Especially, we use the PRM800K data to train the PRM once and apply it on both GSM8K \citep{cobbe2021training} and MATH \citep{MATH} datasets. While for MATH-SHEPHERD, it consists of the samples from both the GSM8K and MATH datasets, and we use the corresponding portion to train two PRMs separately. A validation set of $4096$ samples is held from the training set of each benchmark. We apply the binary cross entropy loss on the second newline token of a step and the last token of each sample.
\newtcolorbox{mycolorbox}[1][]{commonstyle,#1}

\section{Additional Experiments}
In this section, we present four additional experiments to check TSMC's advantage over other non-sampling methods, its performance on easy math problems, its generalizability to other reasoning tasks, and its performance on other problems in the MATH dataset \citep{MATH}.

\subsection{Comparison with Non-sampling Algorithms}
Although TSMC produces an unbiased estimation of importance weight with less variance, the ultimate goal in reasoning tasks is to generate the solution with the highest probability of correctness. In case we have a perfect estimation of the value function, we could greedily select the partial sequence with the highest value and continue the search since then, which in theory should give the highest chance of reaching the correct answer \citep{mudgal2024controlled}. But in practice, the training error is unavoidable, and the estimation of the value function is always imperfect, which makes the sampling necessary. In this section, we compare our TSMC method with some non-sampling approaches to verify the necessity of sampling.

The most straightforward non-sampling variant of our TSMC method is blockwise best-of-$K$, \citep{mudgal2024controlled}, which iteratively selects the partial sequence with the highest value among $K$ partial sequences, and clone the sequence for $K$ times. Such a process is very similar to the resampling process but with the density fully concentrated a single sample. A less greedy method is the Step-Level Beam Search (SBS) \citep{chen2024alphamathzeroprocesssupervision}. At each step, SBS selects the top-$B_1$ partial sequences with the highest values, then clones each sample for $B_2$ times to continue the search in the next step. It is also shown to outperform the Monte Carlo Tree Search method \citep{kocsis2006bandit,coulom2006efficient,silver2016mastering,swiechowski2023monte} in both the efficiency and solving rate. We follow \citet{chen2024alphamathzeroprocesssupervision} to fix $B_2$ in SBS. All these variants share the same batch size for inference as in TSMC, with $K=B_1B_2=M=40$. We fix the value function for all methods and compare TSMC to these variants in Figure \ref{fig:greedy}. 

\begin{figure}[h!]
\centering
    \includegraphics[width=0.5\linewidth]
{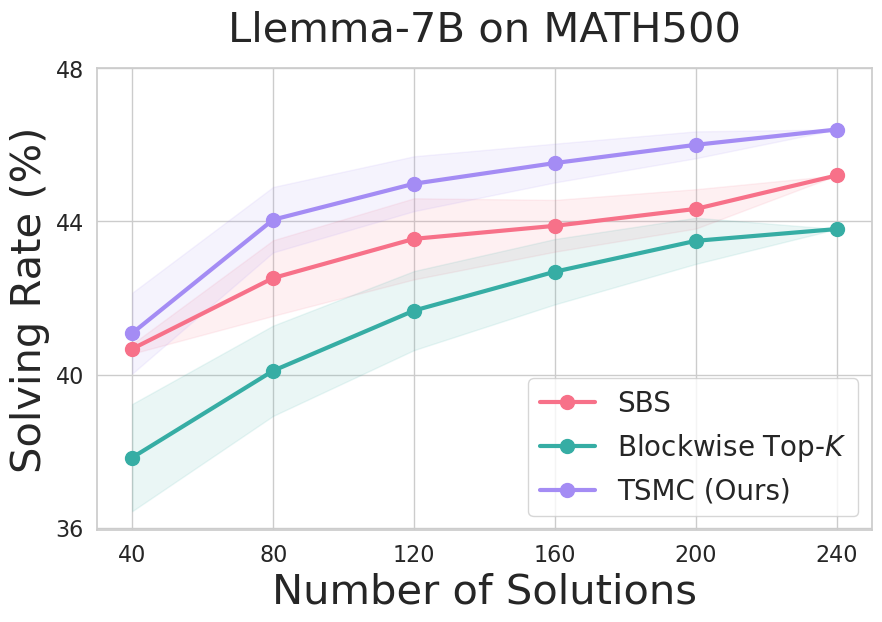}
    \caption{Comparison with non-sampling algorithms. Variance are visualized across many sub-samples of the 240 solutions per problem. } 
    \label{fig:greedy}
\end{figure}

It can be seen that our TSMC method still consistently outperforms both non-sampling approaches. Although SBS achieves better results than other baselines in Table \ref{tab:main}, there remains a clear performance gap compared to our TSMC method. This underscores the robustness of the TSMC method to the estimation error of the value function and highlights the importance of sampling in handling such an error.

\subsection{Easy Math Problems}
 To examine whether TSMC has an adverse effect when the problem is easy, we evaluate the performance of TSMC on an easy math reasoning benchmark, Multiarith \citep{roy-roth-2015-solving}. Since the answer format of Multiarith is identical to that of GSM8K \citep{cobbe2021training}, consisting of integer numbers, we directly utilize the generators and reward/value models trained on GSM8K to evaluate this dataset. All experimental setups remain consistent with those described in Section \ref{sec:experiment}. The comparative results are shown in Table \ref{tab:multiarith}.

\begin{table}[ht!]
    \centering
     \caption{Comparative results in the problem solving rate (\%) on the Multiarith dataset. We use two generators fine-tuned from pretrained Llemma-7B and DeepSeek-7B, respectively. We bold the best results in each category. The voting is performed on 240 samples.}
     \vspace{5pt}
    \label{tab:multiarith}
    \begin{tabular}{llc}
    \toprule
    \textbf{Generators} & \textbf{Methods} &  \textbf{Multiarith}\\ 
        \midrule
    \multirow{5}{*}{Llemma-7B} & Greedy &  66.7\\
    
              & MV &  95.4 \\
              & WMV w. ORM &  97.1\\
              & WMV w. PRM (PRM800K) &  95.4\\
              & WMV w. PRM (SHEPHERD)&  97.8\\
              \cline{2-3}
              & TSMC + MV (Ours) & 98.3\\
              & TSMC + WMV  (Ours)  & \textbf{98.9} \\
    \midrule
    \multirow{5}{*}{DeepSeek-7B} & Greedy  & 85.6\\
              & MV  & 98.3\\
              & WMV w. ORM & 98.3 \\
              & WMV w. PRM (PRM800K) &  97.7 \\
              & WMV w. PRM (SHEPHERD)  & \textbf{98.9} \\
              \cline{2-3}
              & TSMC + MV (Ours)     & \textbf{98.9} \\
              & TSMC + WMV (Ours)    &  \textbf{98.9}\\
    \bottomrule   
     \end{tabular}
\end{table}
Although the absolute performance gap between TSMC and other baselines is reduced, TSMC, especially TSMC + MV, still shows a consistent improvement without showing any adverse effect. But in practice, the usage of TSMC should balance the trade-off between the performance improvement and the additional computational cost. In general, TSMC is particularly well-suited for scenarios where the generator lacks the capacity to produce correct solutions.

\subsection{Other Reasoning Task}
In this section, we demonstrate the generalizability of TSMC to other reasoning tasks beyond mathematical problems. Here we choose the quantitative natural language inference task in the NumGLUE benchmark \citep{mishra-etal-2022-numglue},  which uses a Python program for multi-step reasoning. 

We separate the Python program into steps using the single newline character \texttt{\escape{n}}. We re-split the dataset by choosing a subset of the original validation and testing sets for validation and testing, and add the remaining ones to the training set. This forms a final dataset with $5924$ training samples, $200$ validation samples, and $200$ testing samples. The training set is used for the fine-tuning of the generator and the estimation of the value function, with CodeLlama-7B \citep{rozière2024codellamaopenfoundation} as the pretrained model. The entire training process is kept the same as before. Since this task is relatively easy, we perform TSMC across the full reasoning process, without skipping the first few tokens or setting the maximum number of resampling steps.

The comparative results are presented in Table \ref{tab:nli} below. Since PRMs are not available for this task, we only compare our method with zero-shot greedy decoding, majority voting, and the ORM method. Here we vary the voting sample size across $\{5,10,20,40\}$, and the TSMC batch size is set the same as the number of voting samples. The results indicate that TSMC achieves a consistent advantage over the baselines in all voting sizes. It still achieves a high solving rate ($99\%$) with only 5 voting samples. Despite the simplicity of this task, the promising results have clearly demonstrated the potential of TSMC for application to other reasoning tasks.

\begin{table}[ht!]
    \centering
     \caption{Comparative results in the problem solving rate (\%) on the NumGLUE dataset (quantitative natural language inference). We  fine-tune our generator from pretrained CodeLlama-7B. The voting is performed under $N=5,10,20$ and $40$ respectively (no difference for the zero-shot greedy decoding). We bold the best results in each category. }
     \vspace{5pt}
    \label{tab:nli}
    \begin{tabular}{llcccc}
    \toprule
 \textbf{Generator} & \textbf{Methods} & $N=5$ & $N=10$ & $N=20$  & $N=40$\\ 
    \midrule
        \multirow{5}{*}{CodeLlama-7B} &Greedy & 94.0 & 94.0 & 94.0 & 94.0\\
    &MV &   97.0  &   98.0 & 98.0 & 98.0 \\
    &WMV w. ORM &   97.0 &   98.5  & 98.5 & 98.5 \\
    \cline{2-6}
    &TSMC + MV (Ours) &  98.5 & 99.0  & 99.5   &   99.5 \\
    &TSMC + WMV (Ours) & \textbf{99.0} & \textbf{99.5}  &  \textbf{99.5}  &   \textbf{99.5} \\
    \bottomrule
     \end{tabular}
\end{table}

\subsection{Other Problems in the MATH Dataset }
To investigate whether the conclusion on MATH500 could be generalized to other problems in the MATH dataset \citep{MATH}, here we repeat our evaluation on the on-hold validation set, which consists of $500$ samples as well. The results are shown in Table \ref{tab:valid}. It can be seen that the results are basically consistent with the ones presented in Table \ref{tab:main}, where TSMC takes a consistent lead over all baselines.

\begin{table}[ht!]
    \centering
     \caption{Comparative results in the problem solving rate (\%) on the on-hold validation set in MATH. We use two generators fine-tuned from pretrained Llemma-7B and DeepSeek-7B, respectively. We bold the best results in each category. The voting is performed on 240 samples.}
     \vspace{5pt}
    \label{tab:valid}
    \begin{tabular}{llc}
    \toprule
    \textbf{Generators} & \textbf{Methods} &  \textbf{MATH (validation)}\\ 
        \midrule
    \multirow{5}{*}{Llemma-7B} & Greedy &  26.2\\
    
              & MV &  40.2 \\
              & WMV w. ORM & 42.4 \\
              & WMV w. PRM (PRM800K) & 44.0 \\
              & WMV w. PRM (SHEPHERD)&  46.8 \\
              \cline{2-3}
              & TSMC + MV (Ours) & 47.4\\
              & TSMC + WMV  (Ours)  & \textbf{50.8} \\
    \midrule
    \multirow{5}{*}{DeepSeek-7B} & Greedy  & 33.2\\
              & MV  & 55.2\\
              & WMV w. ORM &  57.2 \\
              & WMV w. PRM (PRM800K) &  58.4  \\
              & WMV w. PRM (SHEPHERD)  & 56.4 \\
              \cline{2-3}
              & TSMC + MV (Ours)     & 56.8 \\
              & TSMC + WMV (Ours)    &  \textbf{61.6}\\
    \bottomrule   

     \end{tabular}
\end{table}

\section{Qualitative Examples}
Here we include the qualitative examples generated by our TSMC method in Table \ref{tab:qualitative_example}. We compare the samples discarded (do not get sampled) and cloned by TSMC (get sampled multiple times) during the same resampling step. We also include the estimated value and the incremental importance weight in each sample. Notably, the incremental importance weight is proportional to the resampling probability up to a constant, so a lower weight means a lower chance to be sampled.

\renewcommand{\arraystretch}{1.5} 
\begin{table*}[h!]
    \centering \small

    \begin{tabular}{p{0.16\textwidth}p{0.78\textwidth}}
    \toprule
    \multirow{1}{*}{\textsc{Question}} & {What is $1^{(2^{235423523})}$?}\\
    \midrule
    \multirow{6}{*}{\makecell[l]{\textsc{Discarded}\\ \\  \textcolor{mypink}{\textsc{Value:} $0.09$} \\ \textcolor{mypink}{\textsc{Weight:} $0.77$}}} & 
   I notice that this expression is a power of $1$, so it is just $1$ by the definition of exponents.\\ &
   However, I also notice that the exponent is very large, so I wonder if there is a pattern or a shortcut to simplify it.\\ & \ctext[RGB]{254,194,204}{I recall that any number raised to a multiple of $2$ has a simple pattern: it is either $1$ or $-1$ depending on whether the exponent is even or odd.}\\
    \midrule
    \multirow{5}{*}{\makecell[l]{\textsc{Cloned} \\ \\ \textcolor{mygreen}{\textsc{Value}: $0.18$} \\ \textcolor{mygreen}{\textsc{Weight}: $1.25$}}} & So the question is asking us to evaluate $1^{(2^{235423523})}$.\\&Right, and we know that anything to the power of $1$ is just $1$.\\&So now we just need to evaluate $1^{2^{235423523}}$.\\&And since $1$ is a power of $1$, then $1^{2^{235423523}} = 1$.\\
    \bottomrule 
    \end{tabular}
    \begin{tabular}{p{0.16\textwidth}p{0.78\textwidth}}

    \multirow{2}{*}{\textsc{Question}} & {A regular hexagon can be divided into six equilateral triangles. If the perimeter of one of the triangles is $21$ inches, what is the perimeter, in inches, of the regular hexagon?}\\
    \midrule
    \multirow{5}{*}{\makecell[l]{\textsc{Discarded}\\ \\  \textcolor{mypink}{\textsc{Value:} $0.02$} \\ \textcolor{mypink}{\textsc{Weight:} $0.27$}}} & 
    Since the hexagon is regular, all the sides are equal. \\
    &\ctext[RGB]{254,194,204}{And since the hexagon can be divided into six equilateral triangles, all the sides of the hexagon are also equal to the perimeter of one of the triangles divided by six.} \\ 
    &\ctext[RGB]{254,194,204}{So the perimeter of the hexagon is $21/6=3.5$ inches.}\\
    \midrule
    \multirow{7}{*}{\makecell[l]{\textsc{Cloned} \\ \\ \textcolor{mygreen}{\textsc{Value}: $0.36$} \\ \textcolor{mygreen}{\textsc{Weight}: $1.37$}}} & I know that the perimeter of a regular hexagon is six times the length of one of its sides, so I need to find the length of one side of the hexagon.\\&If I divide a regular hexagon into six equilateral triangles, each triangle must have the same side length as the hexagon, because they are congruent.\\&So, if the perimeter of one of the triangles is $21$ inches, then each side of the triangle must be $21/3 = 7$ inches.\\
    \bottomrule 
    \end{tabular}
    
    \begin{tabular}{p{0.16\textwidth}p{0.8\textwidth}}

    \textsc{Question} & {How many positive whole-number divisors does $196$ have?}\\
    \midrule
    \multirow{9}{*}{\makecell[l]{\textsc{Discarded}\\ \\ \textcolor{mypink}{\textsc{Value:} $0.17$} \\ \textcolor{mypink}{\textsc{Weight:} $0.85$}}} & 
    To find the number of divisors of a positive whole-number, I can use the fact that it is equal to the product of one more than the exponents of its prime factorization. \\
    & 
    For example, $12$ has six divisors because it can be written as $2^2 * 3^1$, and the product of one more than the exponents is $(2 + 1) * (1 + 1) = 6$.\\
    &So I need to find the prime factorization of $196$ \\
    & I can start by dividing by $2$, the smallest prime, and see if I get an integer.\\
    & \ctext[RGB]{254,194,204}{$196 / 2 = 98$, which is not an integer, so $2$ is not a factor of $196$.}\\
    & I can try the next prime, $3$, and see if I get an integer. \\
    \midrule
    \multirow{8}{*}{\makecell[l]{\textsc{Cloned} \\ \\ \textcolor{mygreen}{\textsc{Value}: $0.33$} \\ \textcolor{mygreen}{\textsc{Weight}: $1.10$}}} & I need to find all the positive integers that divide $196$ evenly, without leaving a remainder. \\
    &One way to do this is to factor $196$ into its prime factors, and then use the formula for the number of divisors of a product of prime powers. \\
    &$196$ is divisible by $2$, so I can write it as $2$ times $98$.\\
    &$98$ is also divisible by $2$, so I can write it as $2$ times $49$.\\
    &$49$ is a perfect square, and it is also divisible by $7$, so I can write it as $7$ times $7$.\\
    &Therefore, $196$ can be written as $2$ times $2$ times $7$ times $7$, or $2^2$ times $7^2$.\\
    \bottomrule 
    \end{tabular}
    \caption{Qualitative examples for partial sequences discarded and cloned by TSMC. The estimated value and incremental importance weight from the TSMC are included. We also manually annotate the incorrect/unpromising reasoning steps and highlight them in the pink color.}
    \label{tab:qualitative_example}
\end{table*}

\newpage
\section{Example Outputs}
\label{sec:example}
Here we include some example outputs from the generator, which follows the basic format in PRM800K \citep{lightman2024lets}. \texttt{\# Question} and \texttt{\# Solution} indicate the input problem and the generated solution, respectively.

{\footnotesize\begin{lstlisting}[belowskip=-0.1 \baselineskip]
# Question

Convert the point $(0,3)$ in rectangular coordinates to polar coordinates. Enter your answer in the form $(r,\theta),$ where $r > 0$ and $0 \le \theta < 2 \pi.$

# Solution

To convert from rectangular to polar coordinates, I need to use the formulas $r = \sqrt{x^2 + y^2}$ and $\theta = \tan^{-1} \frac{y}{x}.$

For this point, $x = 0$ and $y = 3,$ so $r = \sqrt{0^2 + 3^2} = 3$ and $\theta = \tan^{-1} \frac{3}{0}.$

However, I can't divide by zero, so I need to use a limit to avoid that.

As the angle $\theta$ approaches zero from the right, the ratio $\frac{y}{x}$ approaches $\frac{3}{0^+}.$

This means that $\theta$ approaches $\frac{\pi}{2},$ which is the angle in the first quadrant.

Since $r$ is always positive, I don't need to worry about the sign or the quadrant.

Therefore, the polar coordinates are $(3, \frac{\pi}{2}).$

# Answer

(3, \frac{\pi}{2})
\end{lstlisting}}

{\footnotesize\begin{lstlisting}[belowskip=-0.1 \baselineskip]
# Question

The expression $2\cdot 3 \cdot 4\cdot 5+1$ is equal to 121, since multiplication is carried out before addition. However, we can obtain values other than 121 for this expression if we are allowed to change it by inserting parentheses. For example, we can obtain 144 by writing \[
(2\cdot (3\cdot 4)) \cdot (5+1) = 144.
\]In total, how many values can be obtained from the expression $2\cdot 3\cdot 4 \cdot 5 + 1$ by inserting parentheses? (Note that rearranging terms is not allowed, only inserting parentheses).

# Solution

I need to count how many ways I can group the four factors of the product into two pairs, and then combine those pairs using either multiplication or addition.

I can use the binomial coefficient to count the number of ways to choose two factors out of four, and then multiply that by the number of ways to group the remaining two factors.

The binomial coefficient $\binom{4}{2} = \frac{4!}{2!2!}

= 6$ is the number of ways to choose two factors out of four, and it represents the number of ways to arrange the two pairs of factors.

The number of ways to combine those pairs into a single product or sum depends on which pair I choose to be the first one.

If I choose the pair $2\cdot 3$, then I have two options: either to multiply them $(2\cdot 3)$, or to add them to the rest of the expression $(2\cdot 3) \cdot 4\cdot 5 + 1$.

If I choose the pair $4\cdot 5$, then I have two options: either to multiply them $(4\cdot 5)$, or to add them to the rest of the expression $(2\cdot 3) \cdot 4 + 1$.

In either case, the rest of the expression $(4\cdot 5) \cdot 2\cdot 3 + 1$ or $(2\cdot 3) \cdot 4 + 1$ is always a constant, and I can ignore it for now.

So, for each pair of factors, I have two options to combine them into a product or a sum.

Therefore, for each way to arrange the two pairs of factors, I have $2\cdot 2 = 4$ options to combine them into a product or a sum.

This means that there are $6\cdot 4 = 24$ ways to insert parentheses in the expression, and obtain values other than 121.

# Answer

24
\end{lstlisting}}

\end{document}